\newcommand{\name}{\textbf{HALD}}
\newcommand{\cjc}[1]{{\color{black}#1}}
\newcommand{\std}[2]{\begin{tabular}{@{}c@{}}#1{\color{gray}$\scriptscriptstyle  \pm  #2$}\end{tabular}}
\renewcommand*{\@fnsymbol}[1]{\ifcase#1\or$^{\dagger}$\else\@arabic{#1}\fi}
\colorlet{GainColor}{RawSienna}
\newcommand{\gain}[1]{\textcolor{GainColor}{#1}}
\newcommand{\gainup}[1]{\gain{\ensuremath{ ^{\uparrow#1}}}}
\theoremstyle{plain}
\newtheorem{theorem}{Theorem}[section]
\newtheorem{lemma}[theorem]{Lemma}
\newtheorem{corollary}[theorem]{Corollary}
\theoremstyle{definition}
\newtheorem{definition}[theorem]{Definition}
\theoremstyle{remark}
\icmltitlerunning{Hard Labels In! Rethinking the Role of Hard Labels in Mitigating Local Semantic Drift}
\begin{document}

\twocolumn[
  \icmltitle{Hard Labels In! Rethinking the Role of Hard Labels \\ in Mitigating Local Semantic Drift}
  \icmlsetsymbol{equal}{*}

  \begin{icmlauthorlist}
    \icmlauthor{Jiacheng Cui}{mbzuai}
    \icmlauthor{Bingkui Tong}{mbzuai}
    \icmlauthor{Xinyue Bi}{mbzuai}
    \icmlauthor{Xiaohan Zhao}{mbzuai}
    \icmlauthor{Jiacheng Liu}{mbzuai}
    \icmlauthor{Zhiqiang Shen}{mbzuai}
  \end{icmlauthorlist}

  \icmlaffiliation{mbzuai}{Department of Machine Learning, MBZUAI, Abu Dhabi, United Arab Emirates}

  \icmlcorrespondingauthor{Zhiqiang Shen}{Zhiqiang.Shen@mbzuai.ac.ae}
  \icmlkeywords{Machine Learning, ICML}

  \vskip 0.3in
]

\printAffiliationsAndNotice{}  

\begin{abstract}
Soft labels from teacher models are a {\em de facto} practice for knowledge transfer and large-scale dataset distillation (e.g., SRe$^2$L, LPLD). However, when we limit the number of crops per image to reduce the substantial cost of storing precomputed soft labels, these methods suffer severely from \emph{local semantic drift}: visually ambiguous crops can cause soft supervision to deviate from the image-level ground-truth semantics, leading to persistent errors and a train–test distribution mismatch.
We revisit the overlooked role of hard labels and show that, when properly integrated, they can act as a content-invariant semantic anchor that calibrates such drift. We theoretically analyze the emergence of drift under sparse soft-label supervision and demonstrate that hybridizing hard and soft labels restores alignment between visual content and semantic supervision. Building on this insight, we propose a new training paradigm, \textbf{H}ard Label for \textbf{A}lleviating \textbf{L}ocal Semantic \textbf{D}rift (\name{}), which uses hard labels as intermediate corrective signals while preserving the fine-grained benefits of soft labels. Extensive experiments on dataset distillation and large-scale classification benchmarks show consistent generalization improvements. On ImageNet-1K, our method achieves 42.7\% accuracy with only 285M soft-label storage (reduces by {\bf 100$\times$}), outperforming prior state-of-the-art LPLD by 9.0\%. Code is available at \href{https://github.com/Jiacheng8/HALD}{https://github.com/Jiacheng8/HALD}.
\end{abstract}

\section{Introduction}
\label{sec:intro}
Soft labels have emerged as a standard and strong supervision signal derived from pretrained teacher models in knowledge distillation~\citep{hinton2015distilling} and dataset distillation~\citep{sre2l_2024} tasks. Unlike hard labels, which provide only class-level supervision, soft labels encode richer inter-class similarity information, offering smoother gradients and better generalization. In particular, for dataset distillation, FKD-based~\citep{shen2022fast} soft labels have become indispensable because they allow student models to inherit semantic knowledge from powerful teacher networks without relying on access to the teacher in the post stage. This is especially critical in the post-training stage, where the teacher must be completely isolated from the training pipeline as it is trained on the original data, to avoid information leakage and any direct contact with raw full data following the task setting.

\begin{figure}[t]
    \centering
    \includegraphics[width=0.85\columnwidth]{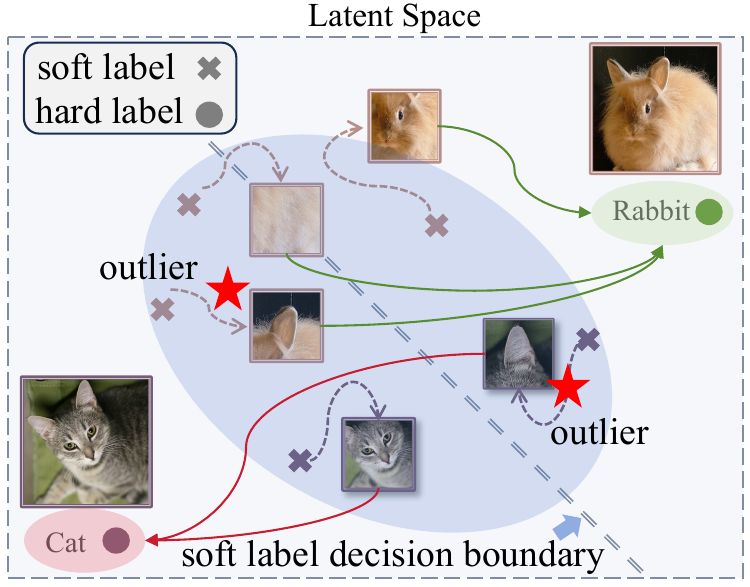}
    \caption{Illustration of {\em local-view semantic drift}: partial crops may change object–label relations, yielding semantics that deviate from the full image.}
    \label{fig:intro_seman}
    \vspace{-.1in}
\end{figure}

Despite these advantages, the reliance on soft labels introduces a critical bottleneck: {\em storage}. The most widely adopted strategy, such as in FKD and FerKD, involves pre-computing and saving soft labels for every image crop in the distilled dataset. While effective, this design leads to massive storage requirements, particularly on large-scale datasets like ImageNet-1K ({\em distilled data}: 750 MB vs. {\em soft-label}: 28.33 GB), even larger than the distilled data storage size which is not acceptable. Thus, storing per-crop logits across thousands of classes results in prohibitive memory overhead, hindering the scalability and practical deployment of dataset distillation pipelines. As datasets continue to grow in size and granularity, compressing or reducing soft label storage has become an urgent problem.

A straightforward approach to alleviate storage costs is to reduce the number of crops, and consequently, the number of soft labels per image. However, this seemingly simple solution introduces a more subtle and overlooked issue: {\em local semantic drift}. As shown in Fig.~\ref{fig:intro_seman}, since crops often capture only partial or ambiguous regions of an image, their soft labels may semantically shift toward unrelated categories. For example, a crop from a cat image can be similar to a rabbit, and the soft embedding derived from the teacher misaligns with the global semantics of the original class. This mismatch between local visual evidence and global semantics undermines training, leading to degraded generalization and unstable predictions. We also provide a theoretical guarantee by establishing a strictly positive lower bound on the expected mismatch between the objective defined with reduced crops and that with sufficient crops. Our analysis shows that this gap is inversely proportional to the number of crops: {\em the fewer the crops, the larger the mismatch}.

\textbf{Hard labels as a corrective signal.}
In contrast, hard labels are content-invariant or independent and immune to local visual ambiguity. While they lack the fine-grained information encoded in soft labels, they provide a stable supervisory anchor tied to the ground-truth semantic identity of the image. This raises a key insight: hard labels, if carefully integrated, could serve as corrective signals to calibrate soft-label supervision and mitigate semantic drift. Surprisingly, this potential has been largely overlooked in the literature, where hard labels are often considered too coarse or discarded entirely in favor of soft labels.
From a theoretical perspective, we further guarantee that proper joint training with soft and hard labels does not introduce gradient inconsistencies that would hinder optimization. On the contrary, the controlled fluctuations arising from hard-label supervision inject additional information, boosting the learning of new knowledge beyond what soft labels alone can provide. On large-scale ImageNet-1K, our method achieves 42.7\% accuracy with only 285M soft-label storage (reduces by {\bf 100$\times$}), outperforming prior state-of-the-art LPLD by 9.0\%.

\textbf{Our contributions.}
1) We revisit the role of hard labels in dataset distillation and introduce a hybrid training paradigm, \textbf{H}ard Label for \textbf{A}lleviating \textbf{L}ocal Semantic \textbf{D}rift (\name{}). 2) Our key idea is to strategically incorporate hard labels to recalibrate the semantic space of image crops while preserving the nuanced information provided by soft labels. 3) We theoretically show that limited soft labels inevitably induce semantic drift and mathematically demonstrate how hard labels mitigate this effect. 4) Extensive experiments across multiple benchmarks confirm that \name{} reduces distribution mismatch and improves generalization, even under \emph{aggressive soft-label compression}.

\section{Related Work}

\textbf{Dataset Distillation.} 
Dataset distillation aims to construct a small, synthetic surrogate of a large dataset that retains its core information content. The goal is to accelerate training and cut storage costs while achieving performance close to training on the full data. Current approaches can be broadly grouped into six families: 1) {\em Gradient Matching}~\citep{zhao2020dataset,zhao2021dataset, lee2022dataset, kim2022dataset, zhou2024improve}.  2) {\em Meta-Model Matching}~\citep{wang2018dataset, nguyen2021dataset, loo2022efficient, zhou2022dataset, deng2022remember, he2024multisize}.  3) {\em Trajectory Matching}~\citep{cui2023scaling, chen2023dataset, guo2024lossless}. 4) {\em Distribution Matching}~\citep{wang2022cafe, DBLP:conf/wacv/ZhaoB23, xue2024towards, lee2022dataset, Sajedi_2023_ICCV, shin2024frequency,  liu2022dataset}.  5) {\em Decoupled Optimization}~\citep{sre2l_2024, GBVSM_2024, shao2024elucidating, yin2023dataset,zhang2025breaking, cui2025fadrm, tran2025nrrdd, shen2025delt, RDED_2024}. 6) {\em Diffusion  Based}~\citep{gu2024efficient, su2024d4m, chen2025igd, zhao2025d3hr, chan-santiago2025mgd3, wang2025cao, zou2025dataset}. A comprehensive overview of recent advances can be found in~\citep{liu2025evolution, shang2025dataset}.

\textbf{Soft Label and Hard Label Usage.} Soft labels are widely adopted in dataset distillation for their richer target structure relative to hard labels, enabling finer guidance during optimization~\citep{sre2l_2024, qin2024label, RDED_2024, yu2024heavy, cui2025dataset}. However, storing per-sample soft targets can introduce a substantial memory overhead, often larger than the image storage itself. To mitigate this, LPLD~\citep{LPLD} proposes generating a limited set of soft targets and reusing them throughout training, substantially reducing the label-storage budget. \cite{yu2024heavy} proposes a label-lightening framework HeLlO that leverages effective image-to-label projectors to generate synthetic labels online from synthetic images. In parallel, GIFT~\citep{shang2024gift} fuses hard information into soft targets to obtain more reliable supervision.

\vspace{-.05in}
\section{Approach}

\textbf{Preliminaries: Dataset Distillation.}
Given dataset $\mathcal{O} = \{(x_i, y_i)\}$, dataset distillation seeks a small set $\mathcal{C} = \{(\tilde{x}_j, \tilde{y}_j)\}$ ($|\mathcal{C}| \ll |\mathcal{O}|$) such that models trained on $\mathcal{C}$ and $\mathcal{O}$ generalize similarly:
\begin{equation}
    \min_{\mathcal{C}} \sup_{(x, y) \sim \mathcal{O}} 
    \left| \mathcal{L}(f_{\theta_{\mathcal{O}}}(x), y) - 
    \mathcal{L}(f_{\theta_{\mathcal{C}}}(x), y) \right|.
\end{equation}
Here, $\theta_{\mathcal{O}}$ and $\theta_{\mathcal{C}}$ are obtained via ERM~\citep{vapnik1991principles} on $\mathcal{O}$ and $\mathcal{C}$, respectively. With this setup in place, prevailing evaluations of distilled datasets rely on pre-generated soft labels, which tends to underemphasize the role of hard labels, despite their zero storage cost and direct ground-truth supervision. Moreover, storing soft labels (often per crop/augmentation) can exceed the images themselves, motivating storage-efficient alternatives. We therefore revisit this design choice and analyze the consequences of a soft-only protocol, particularly under limited soft-label coverage.

\textbf{Soft Label Recap.} Using a teacher model to generate soft labels~\citep{hinton2015distilling} for training a new model has become both common and popular, especially in the field of dataset distillation~\citep{wang2018dataset}, where it has repeatedly been shown to be particularly effective for large-scale datasets. The main drawback of soft labels, however, is that each crop requires storing its own soft label, which leads to substantial storage overhead~\citep{sre2l_2024}. A straightforward workaround is to reduce the number of crops (and thus soft labels) per image~\citep{LPLD}.

However, we identify an often-overlooked issue that arises when only a small number of soft labels are used per image: {\em Semantic Shift.} As illustrated in Fig.~\ref{fig:intro_seman}, soft labels are usually assigned to image crops, but these crops may only capture partial regions. 
This semantic shift problem is intrinsic to soft labels, whereas hard labels, being content-invariant, do not suffer from such drift. While hard labels bring their own limitation: they fail to align the semantic label with the fine-grained visual content, making it difficult for the model to learn detailed representations.

Our work addresses precisely this trade-off. In the following sections, we first provide a theoretical analysis showing why using too few soft labels per image introduces a semantic shift, leading to mismatched train–test distributions and degraded predictions. We then demonstrate how, when used appropriately, hard labels can serve as a corrective signal to calibrate this mismatch, since they provide supervision independent of crop content. Finally, we propose a new training paradigm, {\em Soft–Hard–Soft}, and show through both theoretical explanation and empirical visualizations that it effectively resolves the limitations of existing approaches.

\subsection{Training with Limited Soft Label Coverage} 

\begin{definition}[Local-View Semantic Drift]
\label{def:lvsd}
Fix $\tilde{x}$ and augmentation distribution $\mathcal{T}(\tilde{x})$.
For $x^{(\mathrm{crop})}\sim\mathcal{T}(\tilde{x})$, let $\tilde{p}(x^{(\mathrm{crop})})\in\Delta^C$ be the teacher's soft prediction, and define
\[
\bar{p}:=\mathbb{E}[\tilde{p}(x^{(\mathrm{crop})})],\qquad
\Sigma:=\operatorname{Cov}[\tilde{p}(x^{(\mathrm{crop})})].
\]
For the $s$-crop aggregation $\hat{p}_s := \frac{1}{s}\sum_{i=1}^s \tilde{p}(x^{(\mathrm{crop})}_i)$, the supervision error decomposes exactly as
\[
\mathbb{E}\|\hat{p}_s - e_y\|_2^2
= \underbrace{\|\bar{p}-e_y\|_2^2}_{\displaystyle\text{(I)}}
+ \underbrace{\frac{\operatorname{Tr}(\Sigma)}{s}}_{\displaystyle\text{(II)}}
\]
\noindent where \textup{(I)} is the oracle gap (irreducible), and
\textup{(II)} is LVSD-induced (vanishes as $s\to\infty$).
We define \emph{Local-View Semantic Drift (LVSD)} as the finite-$s$ risk of class inversion
due to crop-level label variance, formally characterized by the event
$\mathcal{E}_{s,c}:=\{\hat{p}_s(c)\geq\hat{p}_s(y)\}$ for $c\neq y$.
When $\bar{p}_y>\bar{p}_c$, letting $v_{s,c}:=(\Sigma_{yy}+\Sigma_{cc}-2\Sigma_{yc})/s$,
the Cantelli inequality yields the distribution-free bound
\[
\Pr(\mathcal{E}_{s,c})
\leq
\frac{v_{s,c}}{v_{s,c}+(\bar{p}_y-\bar{p}_c)^2},
\]
which is monotonically decreasing in $s$ and vanishes as $s\to\infty$.
LVSD is present whenever $\Sigma\neq 0$ and $s<\infty$, and is absent iff $\Pr(\mathcal{E}_{s,c})=0$ for all $c\neq y$.
\end{definition}

\begin{lemma}
\label{lem:lvsd-concentration}
For $s$ i.i.d.\ crops define $\hat{p}_s:=\frac{1}{s}\sum_{i=1}^s \tilde{p}(x^{(\mathrm{crop})}_i)$. Then,
\[
\mathbb{E}[\hat{p}_s]=\bar{p},\quad
\operatorname{Cov}(\hat{p}_s)=\frac{\Sigma}{s},\quad
\mathbb{E}\!\left[\|\hat{p}_s-\bar{p}\|_2^2\right]=\frac{\operatorname{Tr}(\Sigma)}{s}.
\]
In particular, under LVSD, the deviation is strictly positive for any finite $s$ and decays as $\mathcal{O}(1/s)$.
\end{lemma}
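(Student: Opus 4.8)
The plan is to treat this as a textbook second-moment computation for the sample mean of i.i.d.\ $\Delta^C$-valued random vectors, carried out in three short steps, and then to read off the LVSD consequence. All three claimed identities follow from linearity of expectation, independence, and one elementary trace identity; there is no deep ingredient, so the write-up should be compact.

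First I would establish unbiasedness. Since the crops $x^{(\mathrm{crop})}_1,\dots,x^{(\mathrm{crop})}_s$ are identically distributed with $\mathbb{E}[\tilde p(x^{(\mathrm{crop})}_i)]=\bar p$, linearity of expectation gives $\mathbb{E}[\hat p_s]=\tfrac1s\sum_{i=1}^s\bar p=\bar p$. Next, for the covariance I would write $\operatorname{Cov}(\hat p_s)=\tfrac1{s^2}\operatorname{Cov}\!\big(\sum_{i=1}^s\tilde p(x^{(\mathrm{crop})}_i)\big)$ and expand the covariance of the sum; by independence every cross term $\operatorname{Cov}(\tilde p(x^{(\mathrm{crop})}_i),\tilde p(x^{(\mathrm{crop})}_j))$ with $i\neq j$ vanishes, leaving the $s$ diagonal terms each equal to $\Sigma$, hence $\operatorname{Cov}(\hat p_s)=\tfrac{s\Sigma}{s^2}=\tfrac{\Sigma}{s}$.

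For the third identity I would invoke the elementary fact that for any square-integrable random vector $Z$ one has $\mathbb{E}\big[\|Z-\mathbb{E}Z\|_2^2\big]=\operatorname{Tr}(\operatorname{Cov}(Z))$: expand $\|Z-\mathbb{E}Z\|_2^2=\sum_{k}(Z_k-\mathbb{E}Z_k)^2$ and take expectations coordinatewise, so the sum of diagonal variances is exactly the trace of the covariance. Applying this with $Z=\hat p_s$ and substituting $\operatorname{Cov}(\hat p_s)=\Sigma/s$ yields $\mathbb{E}[\|\hat p_s-\bar p\|_2^2]=\operatorname{Tr}(\Sigma/s)=\operatorname{Tr}(\Sigma)/s$. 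Finally, for the LVSD statement: $\Sigma$ is positive semidefinite as a covariance matrix, so $\operatorname{Tr}(\Sigma)=\sum_k\lambda_k(\Sigma)\ge 0$, with equality iff all eigenvalues vanish, i.e.\ iff $\Sigma=0$. Under LVSD (Definition~\ref{def:lvsd}), $\Sigma\neq0$, so $\operatorname{Tr}(\Sigma)>0$ and therefore $\mathbb{E}[\|\hat p_s-\bar p\|_2^2]=\operatorname{Tr}(\Sigma)/s>0$ for every finite $s$, decaying as $\mathcal{O}(1/s)$ (indeed $\Theta(1/s)$).

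I do not expect any genuine obstacle. The only point requiring a word of care is that the trace identity presupposes the coordinates of $\tilde p(x^{(\mathrm{crop})})$ are square-integrable; this is automatic because $\tilde p(x^{(\mathrm{crop})})\in\Delta^C$ is bounded in $[0,1]^C$, so $\bar p$ and $\Sigma$ are well-defined and finite, and all manipulations above are justified.
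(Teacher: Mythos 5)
Your proof is correct and is exactly the standard sample-mean computation the paper implicitly relies on (the paper states this lemma without an explicit proof, treating it as an elementary fact): linearity for unbiasedness, vanishing cross-covariances by independence, the trace identity for the mean-squared deviation, and positive semidefiniteness of $\Sigma$ to conclude $\operatorname{Tr}(\Sigma)>0$ under LVSD. Your remark that boundedness of $\Delta^C$ guarantees square-integrability is a sensible extra precaution and nothing further is needed.
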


\begin{definition}[Soft Label per Image (SLI)]
\label{def:sli}
The soft labels per image (SLI) denote the number of augmented soft labels (e.g., crops or views) generated for each image.
\end{definition}

\begin{definition}[Soft Label per Class (SLC)]
\label{def:slc}
Let $C\in\mathbb{N}$ be the number of classes, and $\mathrm{ipc}$ the number of images per class. Given that each image has $\mathrm{SLI}$ soft labels, the total number of soft labels per class (SLC) is defined as
\[
\mathrm{SLC} = \mathrm{ipc} \times \mathrm{SLI}.
\]
Each soft label is a $C$-dimensional vector, with each scalar entry stored using $b$ bits. The corresponding per-class storage budget (in bits) is therefore
\[
\mathrm{Storage}(\mathrm{SLC}) = \mathrm{SLC} \cdot (C b).
\]
\end{definition}

\textbf{\textit{Discussion (Why SLC).}}
SLC quantifies per-class pre-generated supervision. Fixing SLC controls label-side storage: regardless of IPC, equal SLC yields the same number of stored soft labels per class. By contrast, pruning ratios alone are confounded by IPC and obscure absolute storage. 

To reduce the storage overhead of soft-label supervision, LPLD~\cite{LPLD} limits the total number of stored teacher predictions and reuses them during training. We refer to this storage budget as SLC (Definition~\ref{def:slc}). While this substantially reduces storage, Lemma~\ref{lem:lvsd-concentration} implies that finite-$s$ supervision deviates from the full-coverage regime due to \textit{Local-View Semantic Drift} (Definition~\ref{def:lvsd}). In what follows, we quantify this deviation.

\textbf{Deviation from the Ideal Optimization Objective.}
By Theorem~\ref{thrm_lower_bound_limited_crop}, \emph{Local-View Semantic Drift}, i.e., nonzero per-crop prediction covariance, induces a \emph{strictly positive} lower bound on the expected mismatch between $\mathcal{L}_s$ and $\mathcal{L}_{\mathrm{ideal}}$ of order $\Theta(s^{-1/2})$, with a distribution-dependent constant $C(\sigma,\kappa)$. Consequently, in low-SLC regimes, the finite-SLC objective is systematically misaligned with the ideal supervision goal, the gap vanishes only as $s\!\to\!\infty$.

\begin{theorem}[Proof in Appendix~\ref{proof_lower_bound_limited_crop}]
\label{thrm_lower_bound_limited_crop}
Consider a synthetic image $\tilde{x}$ with an augmentation distribution
$\mathcal{T}(\tilde{x})$.
Each crop $\tilde{x}_i^{(\mathrm{crop})}\sim \mathcal{T}(\tilde{x})$ is assigned
a teacher soft label $\tilde{p}_i \in \Delta^C$, while the student model produces
a predictive distribution $q_\theta(\cdot \mid \tilde{x}_i^{(\mathrm{crop})})$.
Let $\mathcal{L}[\tilde{p}, q] : \Delta^C \times \Delta^C \to \mathbb{R}_{\ge 0}$
denote a per-crop loss functional. The empirical loss over $s$ independent crops is defined as $\mathcal{L}_s(\theta;\tilde{x})
\;\coloneqq\;
\frac{1}{s}\sum_{i=1}^s
\mathcal{L}\!\left[\tilde{p}_i,\, q_\theta(\cdot \mid \tilde{x}_i^{(\mathrm{crop})})\right],$
and the ideal loss under full augmentation coverage is defined as $\mathcal{L}_{\mathrm{ideal}}(\theta;\tilde{x})
\;\coloneqq\;
\mathbb{E}_{\tilde{x}^{(\mathrm{crop})}\sim\mathcal{T}(\tilde{x})}
\!\left[
\mathcal{L}\!\left[\tilde{p},\, q_\theta(\cdot \mid \tilde{x}^{(\mathrm{crop})})\right]
\right].$ For notational simplicity, we omit the explicit dependence on $(\theta,\tilde{x})$
when the context is clear. Define the variance and the normalized fourth central moment of the per-crop loss as
\begin{equation}
\begin{aligned}
\sigma^2
&\coloneqq
\operatorname{Var}_{\tilde{x}^{(\mathrm{crop})}\sim\mathcal{T}(\tilde{x})}
\!\left[
\mathcal{L}\!\left[\tilde{p},\, q_\theta(\cdot \mid \tilde{x}^{(\mathrm{crop})})\right]
\right] < \infty, \\
\kappa
&\coloneqq
\frac{\mathbb{E}\!\left[\bigl(\mathcal{L}-\mathbb{E}\mathcal{L}\bigr)^4\right]}{\sigma^4} \in [1,\infty).
\end{aligned}
\end{equation}
Assume that $\sigma^2 < \infty$ and $\kappa < \infty$.
Then the expected deviation between the empirical and ideal losses satisfies
\begin{equation}
\label{eq:optimization_goal}
\begin{aligned}
\mathbb{E}\!\left[
\bigl|\mathcal{L}_s - \mathcal{L}_{\mathrm{ideal}}\bigr|
\right]
\;\ge\;&
\frac{\sigma}{\sqrt{s}}
\cdot \frac{16}{25\sqrt{5}}
\cdot \min\!\left\{\frac{1}{\kappa},\,\frac{1}{3}\right\}.
\end{aligned}
\end{equation}
\end{theorem}

\begin{figure}[t]
    \centering
    \includegraphics[width=0.9\columnwidth]{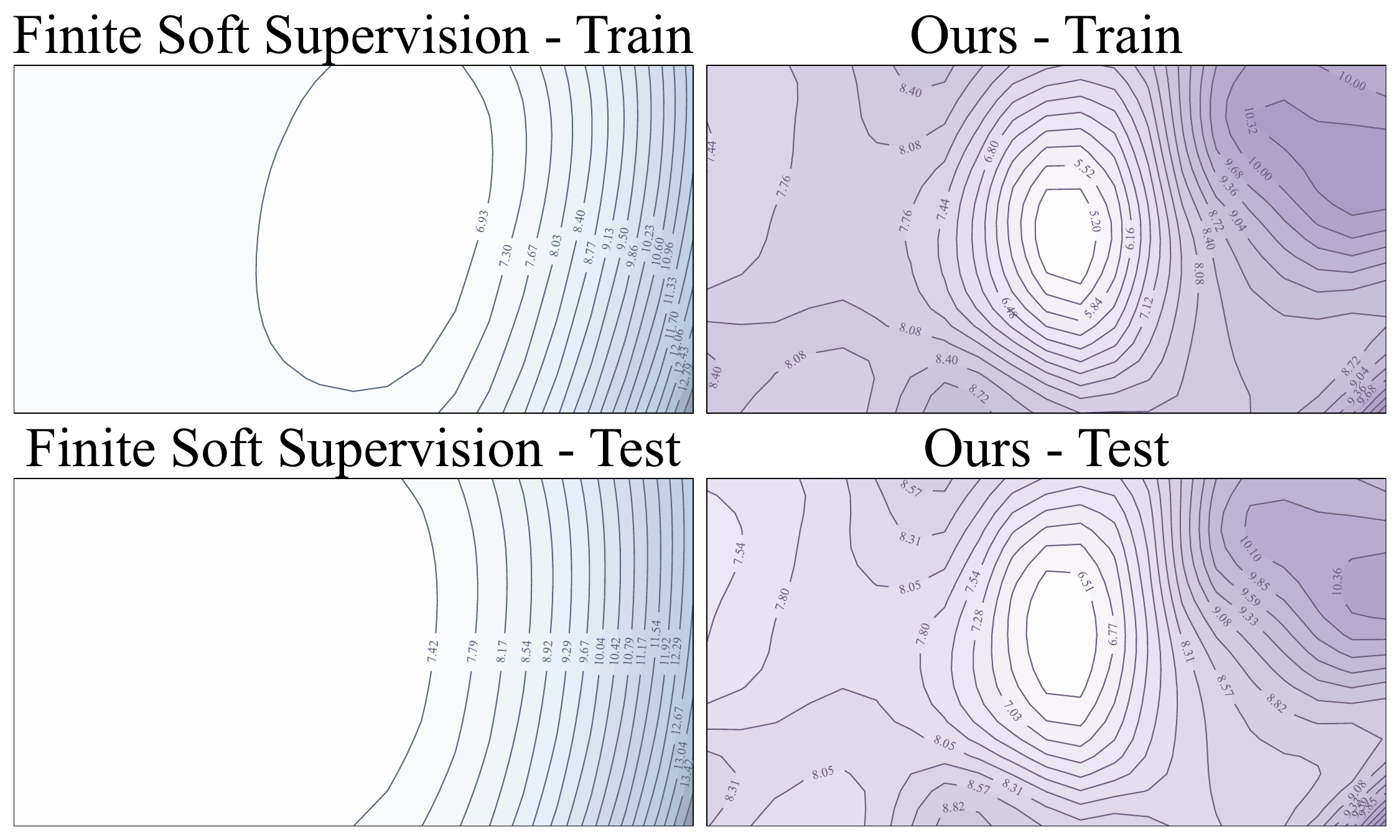}
    \caption{Train and test loss landscapes on an IPC=10 distilled dataset with SLC=50, comparing (i) finite soft-label coverage and (ii) our method.}
    \label{fig:loss_landscape}
    \vspace{-.05in}
\end{figure}

\textbf{Few Soft Labels Make Train-Test Misaligned.}
Let \(\hat\theta_\star := \arg\min_\theta \mathcal L_{\text{ideal}}(\theta)\) denote the \emph{oracle} obtained under exhaustive local-view supervision from a strong teacher. By construction, \(\hat\theta_\star\) \emph{maximally aligns} with the teacher's predictive distribution across local views, we assume it achieves the best attainable generalization. Thus any deviation \(\hat\theta_s \neq \hat\theta_\star\) may degrade generalization. We therefore study the excess loss $\mathbb{E}\!\big[\mathcal{L}_{\mathrm{ideal}}(\hat\theta_s)-\mathcal{L}_{\mathrm{ideal}}(\hat\theta_\star)\big]$, which is nonnegative by the optimality of \(\hat\theta_\star\) for \(\mathcal{L}_{\mathrm{ideal}}\) and vanishes iff \(\hat\theta_s=\hat\theta_\star\). Under limited soft-label coverage (small \(s\)), \(\mathcal L_s\) exhibits \textit{LVSD} and optimizes a proxy of \(\mathcal L_{\text{ideal}}\); consequently \(\hat\theta_s\) departs from \(\hat\theta_\star\), incurring an unavoidable generalization penalty.
Theorem~\ref{thm:finite-s-lb-fixed} formalizes this effect, yielding a lower bound of order \(\Omega(1/s)\) that disappears only as \(s\to\infty\).

\begin{theorem}[Proof in Appendix~\ref{proof_diminished_gen}]\label{thm:finite-s-lb-fixed}
Let $\mathcal L_{\text{ideal}}(\theta)$ be twice continuously differentiable in a neighborhood $\mathcal N$ of its unique minimizer $\hat\theta_\star$, and denote $H_\star:=\nabla^2 \mathcal L_{\text{ideal}}(\hat\theta_\star)\succeq \mu I$ for some $\mu>0$. Write $g(\theta;x):=\nabla_\theta \ell(\theta;x)$ so that $\nabla \mathcal L_{\text{ideal}}(\theta)=\mathbb E[g(\theta;x)]$, and let $\hat\theta_s\in\arg\min_\theta \mathcal L_s(\theta)$ be any ERM. Assume: (A1) (Unbiased score \& covariance) $\mathbb E[g(\hat\theta_\star;x)]=0$, and $\Sigma_\star:=\mathrm{Cov}(g(\hat\theta_\star;x))$ with $\mathbb E\|g(\hat\theta_\star;x)\|^{2+\kappa}<\infty$ for some $\kappa>0$. (A2) (Hessian Lipschitz) $\nabla^2\mathcal L_{\text{ideal}}$ is $L_H$-Lipschitz on $\mathcal N$.
(A3) (Local uniform concentration) There exist $r_0>0$ and constants $C_{\mathrm{uc}}>0$, $\bar c>0$ such that for all $s$ and $\delta\in(0,1)$, with probability at least $1-\delta$, $\sup_{\theta\in\mathbb B(\hat\theta_\star,r_0)}
\big\| H_s(\theta)-\nabla^2\mathcal L_{\text{ideal}}(\theta)\big\|\;\le\;
C_{\mathrm{uc}}\sqrt{\tfrac{\log(1/\delta)}{s}},\quad
H_s(\theta):=\tfrac1s\sum_{i=1}^s \nabla^2_\theta \ell(\theta;x_i),$ and
$\sup_{\theta\in\mathbb B(\hat\theta_\star,r_0)}
\big\|\big(\nabla \mathcal L_s-\nabla \mathcal L_{\text{ideal}}\big)(\theta)
-\big(\nabla \mathcal L_s-\nabla \mathcal L_{\text{ideal}}\big)(\hat\theta_\star)\big\|
\;\le\; \bar c\sqrt{\tfrac{\log(1/\delta)}{s}}\;\|\theta-\hat\theta_\star\|.$ (A4) (ERM stays local) There exists a sequence $\delta_s\downarrow 0$ such that
\(
\Pr\big(\hat\theta_s\in \mathbb B(\hat\theta_\star,r_0)\big)\ge 1-\delta_s.
\) (A5) (Boundedness near optimum) There exists $B<\infty$ such that $\mathcal L_{\text{ideal}}(\theta)\le B$ for all $\theta\in\mathbb B(\hat\theta_\star,r_0)$. See more details about assumptions in Appendix~\ref{proof_diminished_gen}. Then there exist constants $C_1,C_2,C_b>0$ depending only on $(\mu,L_H)$, such that for all $s$,
\begin{equation}
\label{eq:gen_bound}
\begin{aligned}
\mathbb E\!\big[\mathcal L_{\text{ideal}}(\hat\theta_s)
-\mathcal L_{\text{ideal}}(\hat\theta_\star)\big]
&\;\ge\;
\frac{1}{2s}\,\mathrm{tr}\!\big(H_\star^{-1}\Sigma_\star\big)
-\frac{C_1}{s^{3/2}} \\
&\qquad
-\frac{C_2}{s^{2}}
- C_b\,\delta_s .
\end{aligned}
\end{equation}
\end{theorem}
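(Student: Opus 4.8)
The plan is to run a classical M-estimation (Bahadur-type) expansion of the excess risk of $\hat\theta_s$ and then lower-bound its expectation term by term, treating the finite-$s$ gradient noise at $\hat\theta_\star$ as the source of the leading $1/s$ term. Since $\hat\theta_\star$ is the \emph{unique global} minimizer of $\mathcal L_{\text{ideal}}$, the excess loss $\mathcal L_{\text{ideal}}(\hat\theta_s)-\mathcal L_{\text{ideal}}(\hat\theta_\star)$ is pointwise nonnegative; hence on the complement of the ``stays-local'' event $\mathcal E_s:=\{\hat\theta_s\in\mathbb B(\hat\theta_\star,r_0)\}$ (which by (A4) has probability $\ge 1-\delta_s$) we may simply discard it, and it suffices to lower-bound $\mathbb E\big[(\mathcal L_{\text{ideal}}(\hat\theta_s)-\mathcal L_{\text{ideal}}(\hat\theta_\star))\,\mathbf 1_{\mathcal E_s}\big]$. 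On $\mathcal E_s$, a second-order Taylor expansion about $\hat\theta_\star$, using $\nabla\mathcal L_{\text{ideal}}(\hat\theta_\star)=0$ and the Hessian-Lipschitz property (A2), gives
\[
\mathcal L_{\text{ideal}}(\hat\theta_s)-\mathcal L_{\text{ideal}}(\hat\theta_\star)
=\tfrac12(\hat\theta_s-\hat\theta_\star)^\top H_\star(\hat\theta_s-\hat\theta_\star)+R_1,\qquad
|R_1|\le\tfrac{L_H}{6}\|\hat\theta_s-\hat\theta_\star\|^3 .
\]

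Next I would derive a linearization of $\hat\theta_s-\hat\theta_\star$. From $\nabla\mathcal L_s(\hat\theta_s)=0$, writing $\nabla\mathcal L_s=\nabla\mathcal L_{\text{ideal}}+(\nabla\mathcal L_s-\nabla\mathcal L_{\text{ideal}})$, Taylor-expanding $\nabla\mathcal L_{\text{ideal}}(\hat\theta_s)$ around $\hat\theta_\star$, and invoking the uniform Hessian concentration and stochastic-equicontinuity bounds of (A3) together with $(\nabla\mathcal L_s-\nabla\mathcal L_{\text{ideal}})(\hat\theta_\star)=\nabla\mathcal L_s(\hat\theta_\star)$, one gets on $\mathcal E_s$
\[
\hat\theta_s-\hat\theta_\star=-H_\star^{-1}\nabla\mathcal L_s(\hat\theta_\star)+\Delta_s=:u_s+\Delta_s,\qquad
\|\Delta_s\|\le\tfrac{1}{\mu}\big(L_H\|\hat\theta_s-\hat\theta_\star\|^2+\bar c\,\sqrt{\log(1/\delta_s)/s}\;\|\hat\theta_s-\hat\theta_\star\|\big).
\]
By (A1), $\mathbb E\|u_s\|^2=\tfrac1s\mathrm{tr}(H_\star^{-2}\Sigma_\star)\le \mathrm{tr}(\Sigma_\star)/(\mu^2 s)=O(1/s)$, and (A3)--(A5) pin $\|\hat\theta_s-\hat\theta_\star\|$ to $\widetilde O_P(1/\sqrt s)$ on $\mathcal E_s$; feeding this back into the displayed bound in a one-step fixed-point argument upgrades $\|\Delta_s\|$ to $\widetilde O_P(1/s)$.

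Substituting $\hat\theta_s-\hat\theta_\star=u_s+\Delta_s$ into the quadratic form, using $u_s^\top H_\star u_s=\nabla\mathcal L_s(\hat\theta_\star)^\top H_\star^{-1}\nabla\mathcal L_s(\hat\theta_\star)$, and dropping the nonnegative term $\tfrac12\Delta_s^\top H_\star\Delta_s\,\mathbf 1_{\mathcal E_s}$,
\[
\big(\mathcal L_{\text{ideal}}(\hat\theta_s)-\mathcal L_{\text{ideal}}(\hat\theta_\star)\big)\mathbf 1_{\mathcal E_s}
\ \ge\ \tfrac12\,\nabla\mathcal L_s(\hat\theta_\star)^\top H_\star^{-1}\nabla\mathcal L_s(\hat\theta_\star)\,\mathbf 1_{\mathcal E_s}
\ -\ \big|u_s^\top H_\star\Delta_s\big|\,\mathbf 1_{\mathcal E_s}\ -\ |R_1|\,\mathbf 1_{\mathcal E_s}.
\]
Taking expectations: since $\nabla\mathcal L_s(\hat\theta_\star)=\tfrac1s\sum_i g(\hat\theta_\star;x_i)$ has mean $0$ and covariance $\Sigma_\star/s$ by (A1), $\mathbb E[\nabla\mathcal L_s(\hat\theta_\star)^\top H_\star^{-1}\nabla\mathcal L_s(\hat\theta_\star)]=\mathrm{tr}\!\big(H_\star^{-1}\Sigma_\star\big)/s$, and removing the indicator costs a truncation term controlled by Hölder plus Marcinkiewicz--Zygmund and (A4)--(A5), of order $C_b\delta_s$. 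The cross term satisfies $\mathbb E\big|u_s^\top H_\star\Delta_s\big|\le\|H_\star\|\sqrt{\mathbb E\|u_s\|^2}\sqrt{\mathbb E\|\Delta_s\|^2}=O(s^{-1/2})\cdot\widetilde O(s^{-1})=\widetilde O(s^{-3/2})$, and $\mathbb E\big[|R_1|\mathbf 1_{\mathcal E_s}\big]\le\tfrac{L_H}{6}\mathbb E\big[\|\hat\theta_s-\hat\theta_\star\|^3\mathbf 1_{\mathcal E_s}\big]=O(s^{-3/2})+O(s^{-2})$ (from $\|\hat\theta_s-\hat\theta_\star\|\le\|u_s\|+\|\Delta_s\|$); collecting these yields the $C_1/s^{3/2}$ and $C_2/s^{2}$ corrections and closes the bound.

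The main obstacle is the moment bookkeeping in the Bahadur remainder: with only a $(2{+}\kappa)$-th moment on the score (A1), the third- and fourth-moment expansions above are not directly available, so one must truncate on $\mathcal E_s$ — where $\|\hat\theta_s-\hat\theta_\star\|\le r_0$ and (A5) supply a deterministic cap on the expansion — and trade surplus powers of $\|\hat\theta_s-\hat\theta_\star\|$ for the available moment via $\|\hat\theta_s-\hat\theta_\star\|^{p}\le r_0^{\,p-(2+\kappa)}\|\hat\theta_s-\hat\theta_\star\|^{2+\kappa}$. For $\kappa\ge1$ this gives exactly the stated $s^{-3/2}$ rate; for $\kappa\in(0,1)$ the correction degrades to $O(s^{-1-\kappa/2})$, still $o(s^{-1})$, so the $\Omega(1/s)$ conclusion — and hence the claim that finitely many soft labels strictly inflate excess risk whenever $\Sigma_\star\neq 0$ — is unaffected. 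A secondary point is verifying that (A3)--(A4) indeed localize $\hat\theta_s$ and yield the $\widetilde O_P(1/\sqrt s)$ rate; this is the standard localization argument for M-estimation under local strong convexity, and it is where the logarithmic factors hidden in $\widetilde O$ arise, absorbed by letting $\delta_s$ decay polynomially in $s$.
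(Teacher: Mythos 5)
Your proposal follows essentially the same route as the paper's proof: split on the localization event from (A4), Taylor-expand the excess risk to second order around $\hat\theta_\star$, linearize $\hat\theta_s-\hat\theta_\star$ as $-H_\star^{-1}\nabla\mathcal L_s(\hat\theta_\star)$ plus a remainder controlled by (A2)--(A3), and extract the leading $\tfrac{1}{2s}\,\mathrm{tr}(H_\star^{-1}\Sigma_\star)$ term from the score covariance in (A1). If anything, you are more careful than the paper on one point: the paper asserts $\mathbb E\|\Delta_s\|^3=O(s^{-3/2})$ outright, which needs third moments of the score that (A1) does not supply when $\kappa<1$, whereas your truncation on $\mathcal E_s$ (trading surplus powers of $\|\hat\theta_s-\hat\theta_\star\|\le r_0$ for the available $(2+\kappa)$-moment, with the rate degrading gracefully to $O(s^{-1-\kappa/2})$, still $o(1/s)$) handles this correctly.
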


\textbf{Visualization of the Limitations of Limited Soft Label Supervision.}
To illustrate the limitations of limited soft-label coverage, we compare the model's behavior on both the training and test sets, as shown in Fig.~\ref{fig:loss_landscape}. Under finite soft-label supervision, the test-time loss landscape deviates notably from that of the training set, indicating overfitting and reduced generalization.

\begin{figure*}[t]
    \centering
    \includegraphics[width=0.9\textwidth]{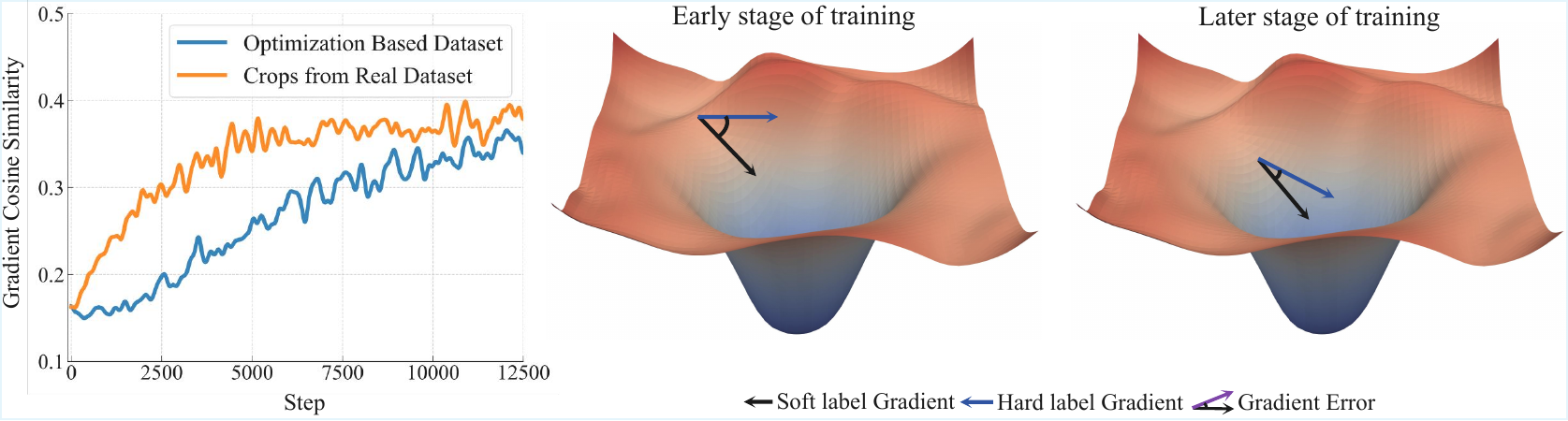}
    \caption{Gradient similarity between hard- and soft-label losses over training, evaluated on real-image crops and optimization-based distilled data, showing a clear upward trend indicative of strengthened alignment.}
    \label{fig:grad_sim_training}
    \vspace{-.1in}
\end{figure*}

\subsection{Calibrating LVSD with Accurate Supervision}
To mitigate \textit{LVSD} under finite-$s$ soft-label coverage, we propose \textbf{H}ard Label to \textbf{A}lleviate \textbf{L}ocal Semantic \textbf{D}rift (\name{}), a \emph{soft$\rightarrow$hard$\rightarrow$soft} calibration schedule. The student first learns coarse discriminative features from finite-$s$ soft labels, improving soft–hard gradient alignment. Hard-label supervision then enforces class-consistent constraints to suppress crop-induced variance. Finally, teacher-guided training resumes on the variance-reduced representation, balancing limited soft-label guidance with global semantic supervision and improving overall performance.

\textbf{How to determine the training duration for each stage.}
We assume (as in our theoretical framework) that the model can fit the finite-$s$ soft-label supervision on $\Omega_{\text{soft}}$ to empirical risk minimization (ERM).
Let $n_{\text{soft}}$ denote the epoch budget required for the model to reach convergence on $\Omega_{\text{soft}}$, and let $n_{\text{total}}$ be the total training budget.
We allocate the remaining epochs to hard-label calibration:
$n_{\text{hard}} \;:=\; n_{\text{total}} - n_{\text{soft}} \;\;\;(\ge 0).$
Training then follows:
\[
\underbrace{T_A}_{\text{soft}} \;=\; \Big\lfloor \tfrac{n_{\text{soft}}}{2} \Big\rfloor,\quad
\underbrace{T_B}_{\text{hard}} \;=\; n_{\text{hard}},\quad
\underbrace{T_C}_{\text{soft}} \;=\; n_{\text{soft}} - T_A,
\]
If $n_{\text{total}} \le n_{\text{soft}}$, we set $n_{\text{hard}}=0$ and run soft-label training only.
This schedule preserves the ERM fit on $\Omega_{\text{soft}}$, inserts a hard-label calibration phase of length $n_{\text{hard}}$ to mitigate local semantic drift, and finally re-aligns with $\Omega_{\text{soft}}$ to consolidate the variance reduction benefits.

\textit{(i) Stage A (soft pretraining).}
Let \(s\) denote the \emph{total} number of pre-generated soft labels (SLC $\times$ C). 
Define the global soft-label pool:
\[
\Omega_{\mathrm{soft}}
:=\big\{(x_i^{(\mathrm{crop})},\,\tilde p_i)\big\}_{i=1}^{s},
\qquad
\tilde p_i:=\tilde p(\cdot\mid x_i^{(\mathrm{crop})}),
\]
where each \(x_i^{(\mathrm{crop})}\) is obtained by sampling a training image \(\tilde x\) and a crop \(x\!\sim\!\mathcal T(\tilde x)\).
At step \(t\), sample indices \(J_t=\{j_1,\dots,j_B\}\subset\{1,\dots,s\}\) \emph{with replacement} and form the mini-batch 
\(\{(x_{j_b}^{(\mathrm{crop})},\tilde p_{j_b})\}_{b=1}^B\).
Using the same per-crop loss \(\mathcal L[\cdot,\cdot]\), minimize the batch estimator,
\[
\widehat{\mathcal L}^{(t)}_{\mathrm{soft}}(\theta)
=\frac{1}{B}\sum_{b=1}^B
\mathcal L\!\big(\tilde p_{j_b},\,q_\theta(\cdot\mid x_{j_b}^{(\mathrm{crop})})\big),
\]
\[
\theta_{t+1}=\theta_t-\eta_t\nabla_\theta \widehat{\mathcal L}^{(t)}_{\mathrm{soft}}(\theta_t).
\]
We denote by \(\hat\theta^{\mathrm A}_s\) the parameters obtained after Stage~A training using this pool-sampling procedure.

\textit{(ii) Stage B (de-LVSD via hard labels).} We use \textit{hard labels} to refer to targets globally anchored to the ground-truth class $y$ regardless of local crop content, in contrast to crop-dependent teacher soft labels. Since the synthetic data has less clear semantics than real data, stronger label flattening is needed for stable calibration; we thus employ label-smoothed targets, which we term \textit{heavily-flattened hard labels}, rather than strict one-hot vectors. Define label smoothing and the CutMix target (for $C$ classes) as: $\mathrm{LS}_\alpha(y)=(1-\alpha)\,\delta_y+\alpha\,\tfrac{\mathbf 1}{C},$ and $t_{\lambda,\alpha}(y,y')=(1-\lambda)\,\mathrm{LS}_\alpha(y)+\lambda\,\mathrm{LS}_\alpha(y').$ Let the sampling space be: 
\[
\Omega_{\mathrm{cal}}
\;:=\;
\Big\{
((\tilde x,y),(\tilde x',y'),x,x',\lambda,m):
\begin{aligned}
& x \sim \mathcal T(\tilde x), \\
& x' \sim \mathcal T(\tilde x'), \\
& \lambda \in (0,1), \\
& m \in \mathcal M
\end{aligned}
\Big\}.
\]
For any $\omega=((\tilde x,y),(\tilde x',y'),x,x',\lambda,m)\in\Omega_{\mathrm{cal}}$, define the calibration loss,
\[
\ell_{\mathrm{cal}}(\theta;\omega):=\mathcal L\!\Big(t_{\lambda,\alpha}(y,y'),\,q_\theta(\cdot\mid \mathrm{CM}_{\lambda,m}(x,x'))\Big).
\]
Initialize $\theta_{0}:=\hat\theta^{\mathrm A}_s$. At each step $t$, draw an i.i.d.\ minibatch $\{\omega_i^{(t)}\}_{i=1}^B\subset\Omega_{\mathrm{cal}}$ and update,
\[
\widehat{\mathcal L}^{(t)}_{\mathrm{cal}}(\theta)=\tfrac{1}{B}\sum_{i=1}^B \ell_{\mathrm{cal}}(\theta;\omega_i^{(t)}),\quad
\theta_{t+1}=\theta_t-\eta_t\,\nabla_\theta \widehat{\mathcal L}^{(t)}_{\mathrm{cal}}(\theta_t).
\]
As crops and CutMix geometry are resampled at every step, minibatches are effectively non-repeating and provide ground-truth–anchored, diverse local views of each base image, thereby mitigating the semantic bias induced by finite-$s$ soft-label supervision in Stage~A.

\textit{(iii) Stage C (soft refinement).}
Initialize from \(\hat\theta^{\mathrm B}\); Stage~C \emph{follows Stage~A’s} pool-based protocol (same sampler on \(\Omega_{\mathrm{soft}}\) and per-crop loss \(\mathcal L\)), yielding final \(\hat\theta\).

\begin{table*}[htb]
\centering
\caption{Comparison with SOTA methods on Tiny-ImageNet.}
\label{main_tab:tiny_imagenet}
\vspace{-.05in}
\centering
\setlength{\tabcolsep}{2pt}
\tiny
\resizebox{0.85\textwidth}{!}{
\begin{tabular}{@{}c|ccccc|ccccc@{}}
\toprule
 & \multicolumn{5}{c|}{SLI = 2} & \multicolumn{5}{c}{\cjc{SLI = 1}} \\
 & \cjc{SRe$^2$L}  & \cjc{RDED} & \cjc{FADRM} & \cjc{LPLD}  & \cjc{Ours}   & \cjc{SRe$^2$L}  & \cjc{RDED} & \cjc{FADRM} & \cjc{LPLD}  & \cjc{Ours}  \\ \midrule

IPC=10 & \std{14.6}{0.2} & \std{12.5}{0.3} & \std{17.4}{0.5} & \std{13.3}{0.3} & \std{\textbf{22.8}}{0.3} & \std{8.3}{0.4} & \std{7.7}{0.2} & \std{10.1}{0.3} & \std{8.2}{0.2} &  \std{\textbf{18.6}}{0.4} \\

{\cellcolor[gray]{0.9}} \cjc{\textit{Storage}}& \multicolumn{5}{c|}{\cellcolor[gray]{0.9}\ \ \ \cjc{SLC = 20 (1.52 MB)}} & \multicolumn{5}{c} {\cellcolor[gray]{0.9}\ \ \  \cjc{SLC = 10 (0.76 MB)}}\\

\cjc{IPC=20} & \cjc{\std{21.8}{0.3}} & \cjc{\std{19.6}{0.4}} & \cjc{\std{26.4}{0.2}} & \cjc{\std{21.3}{0.4}} & \cjc{\std{\textbf{29.7}}{0.5}} & \cjc{\std{14.8}{0.3}} & \cjc{\std{11.7}{0.2}} & \cjc{\std{17.5}{0.2}} & \cjc{\std{14.0}{0.3}} & \cjc{\std{\textbf{25.9}}{0.3}} \\

{\cellcolor[gray]{0.9}} \cjc{\textit{Storage}}& \multicolumn{5}{c|}{\cellcolor[gray]{0.9}\ \ \ \cjc{SLC = 40 (3.04 MB)}} & \multicolumn{5}{c} {\cellcolor[gray]{0.9}\ \ \  \cjc{SLC = 20 (1.52 MB)}}\\

\cjc{IPC=30} & \cjc{\std{27.3}{0.5}} & \cjc{\std{23.6}{0.6}} & \cjc{\std{31.0}{0.4}} & \cjc{\std{27.5}{0.5}} & \cjc{\std{\textbf{33.8}}{0.4}} & \cjc{\std{19.7}{0.2}} & \cjc{\std{17.9}{0.5}} & \cjc{\std{23.8}{0.4}} & \cjc{\std{17.4}{0.4}} & \cjc{\std{\textbf{28.7}}{0.3}} \\
{\cellcolor[gray]{0.9}} \cjc{\textit{Storage}}& \multicolumn{5}{c|}{\cellcolor[gray]{0.9}\ \ \ \cjc{SLC = 60 (4.56 MB)}} & \multicolumn{5}{c} {\cellcolor[gray]{0.9}\ \ \  \cjc{SLC = 30 (2.28 MB)}}\\

\cjc{IPC=40} & \cjc{\std{29.6}{0.2}} & \cjc{\std{26.8}{0.5}} & \cjc{\std{32.9}{0.4}} & \cjc{\std{29.1}{0.4}} & \cjc{\std{\textbf{35.2}}{0.3}} & \cjc{\std{22.2}{0.3}} & \cjc{\std{19.6}{0.3}} & \cjc{\std{26.6}{0.3}} & \cjc{\std{22.1}{0.5}} & \cjc{\std{\textbf{30.3}}{0.4}} \\
{\cellcolor[gray]{0.9}} \cjc{\textit{Storage}}& \multicolumn{5}{c|}{\cellcolor[gray]{0.9}\ \ \ \cjc{SLC = 80 (6.08 MB)}} & \multicolumn{5}{c} {\cellcolor[gray]{0.9}\ \ \  \cjc{SLC = 40 (3.04 MB)}}\\

\cjc{IPC=50} & \std{31.9}{0.2} & \std{27.9}{0.4} & \std{36.0}{0.3}& \std{34.3}{0.3} & \std{\textbf{38.2}}{0.5}  & \std{24.0}{0.4} & \std{20.5}{0.2} &\std{27.8}{0.5}  & \std{24.1}{0.2} & \std{\textbf{30.7}}{0.4} \\

{\cellcolor[gray]{0.9}} \cjc{\textit{Storage}}& \multicolumn{5}{c|}{\cellcolor[gray]{0.9}\ \ \ \cjc{SLC = 100 (7.60 MB)}} & \multicolumn{5}{c} {\cellcolor[gray]{0.9}\ \ \  \cjc{SLC = 50 (3.80 MB)}}\\

\bottomrule
\end{tabular}
}
\vspace{-.1in}
\end{table*}

\begin{table*}[htb]
\centering
\caption{Comparison with SOTA methods on ImageNet-1K. $^\dagger$ denotes the reported results. }
\label{main_tab:imagenet1k}
\vspace{-.05in}
\centering
\setlength{\tabcolsep}{2pt}
\tiny
\resizebox{0.85\textwidth}{!}{
\begin{tabular}{@{}c|ccccc|ccccc@{}}
\toprule
 & \multicolumn{5}{c|}{\cjc{SLI = 10}} & \multicolumn{5}{c}{\cjc{SLI = 5}} \\
  & \cjc{SRe$^2$L}  & \cjc{RDED} & \cjc{FADRM} & \cjc{LPLD}  & \cjc{Ours}   & \cjc{SRe$^2$L}  & \cjc{RDED} & \cjc{FADRM} & \cjc{LPLD}  & \cjc{Ours}  \\ \midrule
IPC=10    & \std{25.9}{0.2} & \std{19.9}{0.5} & \std{27.9}{0.3} & \std{23.1$^\dagger$}{0.1} & \std{\textbf{35.6}}{0.3} & \std{14.5}{0.2} & \std{11.8}{0.6} &\std{16.1}{0.3}  & \std{14.5}{0.3} & \std{\textbf{30.3}}{0.4} \\

{\cellcolor[gray]{0.9}} \cjc{\textit{Storage}}& \multicolumn{5}{c|}{\cellcolor[gray]{0.9}\ \ \ \cjc{SLC = 100 (190 MB)}} & \multicolumn{5}{c} {\cellcolor[gray]{0.9}\ \ \  \cjc{SLC = 50 (95 MB)}}\\

\cjc{IPC=20} & \cjc{\std{35.1}{0.3}} & \cjc{\std{29.1}{0.4}} & \cjc{\std{40.1}{0.3}} & \cjc{\std{35.9$^\dagger$}{0.3}} & \cjc{\std{\textbf{46.5}}{0.4}} & \cjc{\std{25.4}{0.4}} & \cjc{\std{21.0}{0.4}} & \cjc{\std{29.5}{0.5}} & \cjc{\std{24.0}{0.4}} & \cjc{\std{\textbf{42.9}}{0.3}} \\

{\cellcolor[gray]{0.9}} \cjc{\textit{Storage}}& \multicolumn{5}{c|}{\cellcolor[gray]{0.9}\ \ \ \cjc{SLC = 200 (380 MB)}} & \multicolumn{5}{c} {\cellcolor[gray]{0.9}\ \ \  \cjc{SLC = 100 (190 MB)}}\\

\cjc{IPC=30} & \cjc{\std{40.1}{0.4}} & \cjc{\std{37.2}{0.5}} & \cjc{\std{46.6}{0.6}} & \cjc{\std{42.0}{0.2}} & \cjc{\std{\textbf{50.7}}{0.2}} & \cjc{\std{30.9}{0.4}} & \cjc{\std{25.9}{0.3}} & \cjc{\std{37.1}{0.3}} & \cjc{\std{31.9}{0.2}} & \cjc{\std{\textbf{46.5}}{0.4}} \\

{\cellcolor[gray]{0.9}} \cjc{\textit{Storage}}& \multicolumn{5}{c|}{\cellcolor[gray]{0.9}\ \ \ \cjc{SLC = 300 (570 MB)}} & \multicolumn{5}{c} {\cellcolor[gray]{0.9}\ \ \  \cjc{SLC = 150 (285 MB)}}\\

\cjc{IPC=40} & \cjc{\std{43.3}{0.2}} & \cjc{\std{40.9}{0.6}} & \cjc{\std{50.0}{0.5}} & \cjc{\std{44.9}{0.4}} & \cjc{\std{\textbf{52.6}}{0.3}} & \cjc{\std{35.1}{0.2}} & \cjc{\std{32.1}{0.5}} & \cjc{\std{41.4}{0.6}} & \cjc{\std{36.3}{0.3}} & \cjc{\std{\textbf{48.7}}{0.3}} \\

{\cellcolor[gray]{0.9}} \cjc{\textit{Storage}}& \multicolumn{5}{c|}{\cellcolor[gray]{0.9}\ \ \ \cjc{SLC = 400 (760 MB)}} & \multicolumn{5}{c} {\cellcolor[gray]{0.9}\ \ \  \cjc{SLC = 200 (380 MB)}}\\

IPC=50  & \std{46.8}{0.3}&\std{43.5}{0.5}  & \std{52.7}{0.6} & \std{48.6$^\dagger$}{0.2}&  \std{\textbf{53.7}}{0.2} & \std{39.5}{0.2} & \std{34.9}{0.5} & \std{45.5}{0.4} & \std{39.4}{0.3} & \std{\textbf{49.5}}{0.2}\\

{\cellcolor[gray]{0.9}} \cjc{\textit{Storage}}& \multicolumn{5}{c|}{\cellcolor[gray]{0.9}\ \ \ \cjc{SLC = 500 (950 MB)}} & \multicolumn{5}{c} {\cellcolor[gray]{0.9}\ \ \  \cjc{SLC = 250 (475 MB)}}\\
\bottomrule
\end{tabular}
}
\vspace{-.1in}
\end{table*}

\subsection{Theoretical Analysis for \name{}}

\textbf{Optimization Coherence and Stability.}  
Theorem~\ref{thm:soft_to_hard_alignment_dual_concise2} shows that soft–hard gradient alignment is governed by the ratio $D/m_0$, where $D$ is the inter-class gradient spread and $m_0$ the minimal gradient norm. As training converges, representations stabilize and classifier alignment improves, causing $D$ to decay faster than $m_0$ and tightening the alignment bound. Consistently, Fig.~\ref{fig:grad_sim_training} shows a positive and steadily increasing cosine similarity between soft- and hard-label gradients, validating the theory. The \emph{Soft–Hard–Soft} design naturally follows from this theory: the first Soft stage aligns the model and strengthens gradient similarity; the \textbf{Hard stage} reduces variance and corrects semantic drift; and the final stage restores fine-grained teacher consistency on the variance-reduced representation.

\begin{theorem}[Soft--Hard Gradient Consistency; proof in Appendix~\ref{proof:mixing_dual}]
\label{thm:soft_to_hard_alignment_dual_concise2}
Fix a crop $x^{(\mathrm{crop})}\!\sim\!\mathcal T(\tilde x)$ and $C$ classes. Let $g_c:=\nabla_\theta\log q_\theta(c\mid x^{(\mathrm{crop})})$, $\nabla_\theta\mathcal L_{\rm soft}=-\sum_c \tilde p_c g_c$, $\nabla_\theta\mathcal L_{\rm hard}=-\sum_c \bar p^{(\alpha)}_c g_c$ (where $\bar p^{(\alpha)}$ is the $\alpha$-smoothed one–hot). Assume $D:=\sup_{i\neq j}\|g_i-g_j\|<\infty$ and $m_0:=\min\{\|\sum_c \tilde p_c g_c\|,\ \|\sum_c \bar p^{(\alpha)}_c g_c\|\}>0$. Then there exists a constant $C_{\mathrm{align}}(\tilde x, \alpha)$ depending only on the teacher’s predictive entropy and the smoothing rate such that,
\[
\mathbb{E}_{\mathrm{crop}}\!\left[\cos\!\left(\nabla_\theta\mathcal L_{\rm soft},\,\nabla_\theta\mathcal L_{\rm hard}\right)\right]
\;\ge\;
1-\frac{D}{m_0} \cdot C_{\mathrm{align}}(\tilde x, \alpha).
\]
\end{theorem}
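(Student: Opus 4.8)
The plan is to bound the cosine similarity between the two gradient vectors $\nabla_\theta\mathcal L_{\rm soft}=-\sum_c \tilde p_c g_c$ and $\nabla_\theta\mathcal L_{\rm hard}=-\sum_c \bar p^{(\alpha)}_c g_c$ by exploiting that both are convex combinations of the same family $\{g_c\}_{c=1}^C$ whose pairwise differences are bounded by $D$. The key observation is that any two convex combinations of points lying in a set of diameter $D$ differ by a vector of norm at most $D$: writing $\Delta := \sum_c(\tilde p_c - \bar p^{(\alpha)}_c)g_c$, one fixes a reference index (say $c=1$), substitutes $g_c = g_1 + (g_c - g_1)$, uses that $\sum_c \tilde p_c = \sum_c \bar p^{(\alpha)}_c = 1$ to kill the $g_1$ term, and obtains $\|\Delta\| = \|\sum_{c}(\tilde p_c-\bar p^{(\alpha)}_c)(g_c-g_1)\|\le \sum_c |\tilde p_c - \bar p^{(\alpha)}_c|\,D = \|\tilde p - \bar p^{(\alpha)}\|_1\, D$. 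Thus $\|\Delta\|\le D\cdot\|\tilde p-\bar p^{(\alpha)}\|_1$, and this $\ell_1$ distance is exactly the quantity that depends only on the teacher's predictive distribution and the smoothing rate $\alpha$ — it will form (a piece of) the constant $C_{\mathrm{align}}(\tilde x,\alpha)$.

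Next I would convert the additive perturbation bound into a cosine lower bound. Write $u = \nabla_\theta\mathcal L_{\rm soft}$ and $v = \nabla_\theta\mathcal L_{\rm hard} = u + \Delta$ (up to sign, which cancels in the cosine). A standard estimate gives
\[
\cos(u,v) = \frac{\langle u, u+\Delta\rangle}{\|u\|\,\|u+\Delta\|}
= \frac{\|u\|^2 + \langle u,\Delta\rangle}{\|u\|\,\|u+\Delta\|}
\;\ge\; 1 - \frac{\|\Delta\|}{\|u\|} - \frac{\|\Delta\|}{\|v\|}\cdot(\text{lower order}),
\]
or more cleanly, using $1-\cos(u,v)\le \tfrac12\|u/\|u\| - v/\|v\|\,\|^2$ together with the perturbation-of-normalization inequality $\|\,u/\|u\| - v/\|v\|\,\| \le 2\|u-v\|/\min(\|u\|,\|v\|) = 2\|\Delta\|/m_0$. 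This yields $1-\cos(u,v)\le 2\|\Delta\|^2/m_0^2$, or via the first route the linear-in-$\|\Delta\|$ form $\cos(u,v)\ge 1 - 2\|\Delta\|/m_0$. Combining with $\|\Delta\|\le D\,\|\tilde p-\bar p^{(\alpha)}\|_1$ gives $\cos(u,v)\ge 1 - \tfrac{D}{m_0}\cdot 2\|\tilde p-\bar p^{(\alpha)}\|_1$ for a single crop.

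Finally I would take the expectation over the crop distribution $\mathcal T(\tilde x)$. Since $D$ and $m_0$ are stated as deterministic (worst/best case) quantities over the crop — or, if they are treated per-crop, one pushes the expectation inside after a uniform bound — the only random object is $\|\tilde p(x^{(\mathrm{crop})})-\bar p^{(\alpha)}\|_1$, and its expectation depends only on the teacher's output statistics (hence on the teacher's predictive entropy, which controls how far $\tilde p$ sits from any fixed smoothed one-hot) and on $\alpha$. Defining $C_{\mathrm{align}}(\tilde x,\alpha) := 2\,\mathbb{E}_{\mathrm{crop}}\|\tilde p(x^{(\mathrm{crop})})-\bar p^{(\alpha)}\|_1$ and using linearity of expectation delivers the boxed inequality. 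The main obstacle, and the place where care is required, is the handling of $D$ and $m_0$ under the expectation: if these are per-crop random variables rather than uniform constants, one cannot simply factor $D/m_0$ out of $\mathbb{E}_{\mathrm{crop}}[\cdot]$, so I would either (a) interpret $D=\sup$ and $m_0=\min$ as stated — making them deterministic so the factoring is immediate — or (b) invoke the assumptions $D<\infty$ and $m_0>0$ to bound the per-crop ratio uniformly and absorb any residual variation into $C_{\mathrm{align}}$. A secondary subtlety is verifying that the sign ambiguity in $u,v$ (both carry a leading minus) genuinely cancels in the cosine, which it does since $\cos(-u,-v)=\cos(u,v)$, and that $\bar p^{(\alpha)}$ being a valid probability vector is what makes $\sum_c(\tilde p_c-\bar p^{(\alpha)}_c)=0$, the identity that the whole argument rests on.
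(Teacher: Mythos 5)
Your proposal follows essentially the same route as the paper's proof: bound the gradient difference $\Delta=\sum_c(\tilde p_c-\bar p^{(\alpha)}_c)g_c$ by $D\,\|\tilde p-\bar p^{(\alpha)}\|_1$ using that the coefficients sum to zero (the paper's Lemma~\ref{lem:mixing_dual} gets the sharper constant $D/2$ via a positive/negative mass split, but that only changes the constant), then convert to a cosine bound via $1-\cos(u,v)=\tfrac12\|\hat u-\hat v\|^2\le 2\|\Delta\|/m_0$, and finally take the expectation over crops. Your handling of the sign cancellation and of $D,m_0$ as deterministic sup/min quantities matches the paper's (implicit) treatment.

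The one piece you leave incomplete is the last clause of the theorem: the constant must depend \emph{only on the teacher's predictive entropy} and $\alpha$, whereas your $C_{\mathrm{align}}:=2\,\mathbb{E}_{\mathrm{crop}}\|\tilde p-\bar p^{(\alpha)}\|_1$ depends on the full predictive distribution (two distributions with equal entropy can sit at different $\ell_1$ distances from the smoothed one-hot). The paper closes this with two short steps you only gesture at: the triangle inequality through the one-hot $e_y$ at $y=\arg\max_c\tilde p_c$ gives $\|\tilde p-\bar p^{(\alpha)}\|_1\le 2(1-p_{\max})+2\alpha(1-\tfrac1C)$ exactly, and then $p_{\max}\ge e^{-H(\tilde p)}$ together with $1-e^{-x}\le x$ gives $1-p_{\max}\le H(\tilde p)$, so that $C_{\mathrm{align}}(\tilde x,\alpha)=2\,\mathsf H_{\mathrm{teacher}}(\tilde x)+2\alpha(1-\tfrac1C)$ is a genuine function of the expected teacher entropy and $\alpha$ alone. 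Adding those two lines would make your argument complete and equivalent to the paper's.
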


\begin{corollary}[Proof in Appendix~\ref{proof:shs_improve_gen}]\label{cor:ess-from-cos}
Assume the conditions of Theorem~\ref{thm:soft_to_hard_alignment_dual_concise2} hold so that $\mathbb{E}\big[\langle u,\,v\rangle\big]
=\mathbb{E}\!\left[\cos\!\big(v_{\mathrm{soft}},v_{\mathrm{hard}}\big)\right]\ \ge\ \rho_\star$,
where $u:=v_{\mathrm{soft}}/\|v_{\mathrm{soft}}\|$ and $v:=v_{\mathrm{hard}}/\|v_{\mathrm{hard}}\|$.
Let $(u_i,v_i)_{i=1}^s$ be i.i.d.\ copies of $(u,v)$.
Then the effective sample size satisfies:
\begin{equation}
s_{\mathrm{eff}}\ \ge\ \dfrac{s}{\,1-\rho_\star^2\,}.
\end{equation}
\end{corollary}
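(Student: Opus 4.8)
The plan is to read $s_{\mathrm{eff}}$ through a control-variate lens. The soft-label pool $\Omega_{\mathrm{soft}}$ supplies $s$ i.i.d.\ gradient directions $u_1,\dots,u_s$, while the hard-label calibration phase (Stage B), which resamples fresh crops and CutMix geometry at \emph{every} step, furnishes an essentially unlimited stream of the companion direction $v$, i.e.\ it effectively exposes $\mathbb{E}[v]$ as a known reference. In the variance-reduction formulation, combining a noisy estimator of variance $\sigma^2$ with a control variate correlated at level $r$ yields residual variance $\sigma^2(1-r^2)$, so $s$ soft samples become worth $s_{\mathrm{eff}}=s/(1-r^2)$. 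Thus the corollary reduces to two claims: (i) the residual-variance factor attainable here is at most $1-\rho_\star^2$, and (ii) $t\mapsto s/(1-t^2)$ is increasing on $[0,1)$, so it suffices to lower bound by $\rho_\star$ the correlation coefficient governing the reduction.

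For step (i) I would argue geometrically on the unit sphere. For each index $i$, decompose $u_i=\langle u_i,v_i\rangle\,v_i+w_i$ with $w_i\perp v_i$, so that $\|w_i\|^2=1-\langle u_i,v_i\rangle^2$. Since Stage~B removes (absorbs) the $v_i$-aligned component while preserving alignment — this is exactly what Theorem~\ref{thm:soft_to_hard_alignment_dual_concise2} guarantees, namely a high expected cosine between $v_{\mathrm{soft}}$ and $v_{\mathrm{hard}}$ — the per-sample residual uncertainty after calibration is $\mathbb{E}\big[\|w_i\|^2\big]=1-\mathbb{E}\big[\langle u_i,v_i\rangle^2\big]$. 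By Jensen's inequality and the i.i.d.\ hypothesis, $\mathbb{E}\big[\langle u_i,v_i\rangle^2\big]\ge\big(\mathbb{E}\langle u_i,v_i\rangle\big)^2\ge\rho_\star^2$ (here positivity of $\rho_\star$ is used to square the bound), so the residual factor is at most $1-\rho_\star^2$; feeding this into $s_{\mathrm{eff}}=s/(\text{residual factor})$ together with the monotonicity of step (ii) gives $s_{\mathrm{eff}}\ge s/(1-\rho_\star^2)$. An equivalent route is to directly minimize over $c$ in $\mathrm{Var}\big(\tfrac1s\sum_i[u_i-c(v_i-\mathbb{E}v_i)]\big)$, obtain residual variance $\tfrac{\sigma^2}{s}(1-r^2)$ with $r$ the soft--hard correlation, and bound $r\ge\mathbb{E}\langle u,v\rangle\ge\rho_\star$ using $\|u\|=\|v\|=1$.

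The main obstacle is not any single inequality — Jensen and the monotonicity are routine — but pinning down the bridge between the purely geometric quantity $\mathbb{E}\langle u,v\rangle$ and a genuine variance-reduction statement about the ERM: one must justify rigorously that Stage~B's abundant, ground-truth-anchored hard-label samples play the role of a control variate with known mean, so that the gradient-covariance term $\mathrm{tr}(H_\star^{-1}\Sigma_\star)$ in Theorem~\ref{thm:finite-s-lb-fixed} is effectively scaled by $(1-\rho_\star^2)$. I would handle this by (a) fixing the precise definition of $s_{\mathrm{eff}}$ in the appendix as the control-variate effective sample size, (b) exploiting the i.i.d.\ structure of $(u_i,v_i)$ so the per-sample residuals average cleanly, and (c) invoking $\rho_\star\in(0,1)$ — inherited from Theorem~\ref{thm:soft_to_hard_alignment_dual_concise2}, which needs $\tfrac{D}{m_0}C_{\mathrm{align}}(\tilde x,\alpha)<1$ — to ensure $1-\rho_\star^2>0$, so the bound is finite and strictly improves on the naive count $s$.
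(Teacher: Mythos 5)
Your proposal is correct and matches the paper's proof: the paper likewise treats $v$ as a scalar control variate for $u$, minimizes $\mathbb{E}\|u-\beta v\|^2=1-2\beta\,\mathbb{E}\langle u,v\rangle+\beta^2$ at $\beta^\star=\mathbb{E}\langle u,v\rangle$ to get residual $1-(\mathbb{E}\langle u,v\rangle)^2\le 1-\rho_\star^2$, averages the \emph{centered} residuals over the $s$ i.i.d.\ copies so the cross terms vanish, and then defines $s_{\mathrm{eff}}$ by equating the resulting mean-square error $\tfrac{1-\rho_\star^2}{s}$ with $\tfrac{1}{s_{\mathrm{eff}}}$ --- exactly your ``equivalent route.'' Your primary route (per-sample orthogonal projection plus Jensen) yields the same or a slightly tighter residual factor, and your caveat that $s_{\mathrm{eff}}$ must be pinned down as the control-variate effective sample size is precisely how the paper closes the argument (it is a definition, not a derived property), so the only thing to tidy is the centering step you glossed over in the projection version.
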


\textbf{Analysis of Hard Label Calibration.}
By Corollary~\ref{cor:ess-from-cos}, hard-label calibration increases the effective sample size from $s$ to at least $s_{\mathrm{eff}}$, thereby improving the optimization objective in Equations~\ref{eq:optimization_goal} and the generalization bound in Equation~\ref{eq:gen_bound} via variance reduction. Performance gain is visualized in Fig.~\ref{fig:loss_landscape}. \emph{Intuitively}, hard-label calibration mitigates local semantic drift by enlarging the effective sample size, which reduces the sample variance and alleviates overfitting from finite-$s$ soft-label supervision.
This improvement is driven by the strong alignment between soft- and hard-label gradients (high expected cosine similarity), ensuring that optimization on hard labels remains informative about unseen crops drawn from the same population distribution.

\section{Experiment}

\subsection{Experiment Settings}

\textbf{Datasets.} We evaluate \name{} on Tiny-ImageNet ($64{\times}64$, $C{=}200$)~\citep{le2015tiny} and ImageNet-1K ($224{\times}224$, $C{=}1000$)~\citep{deng2009imagenet}, two widely-adopted benchmarks for dataset distillation that together span both low- and high-resolution regimes with markedly different class scales, enabling a comprehensive assessment of the scalability and generalization of our method across diverse settings.

\textbf{Generation Methods.} We consider four representative paradigms that collectively cover the synthetic--real and low--high diversity axes: (i) SRe$^2$L~\citep{sre2l_2024}, an optimization-based approach that synthesizes images from a pretrained model but suffers from limited intra-class diversity; (ii) LPLD~\citep{LPLD}, which explicitly improves the diversity of SRe$^2$L; (iii) RDED~\citep{RDED_2024}, a selection-based method that constructs distilled data from class-preserving crops of real images; and (iv) FADRM~\citep{cui2025fadrm}, a residual-hybrid approach that fuses real-image priors with optimized synthetic images. Spanning the spectrum from fully synthetic to real-data selection and from low to high diversity, these methods provide a comprehensive basis for comparison.

\textbf{Baseline Methods.} We compare the proposed training paradigm (\name{}) with three baselines: (i) \emph{Soft-Only}, which uses only soft-label supervision; (ii) GIFT~\citep{shang2024gift}, which directly integrates hard-label information into soft labels; and (iii) Joint Objective, which optimizes a combined loss $\mathcal{L} = \mathcal{L}_{\text{soft}} + \lambda \mathcal{L}_{\text{hard}}$, where $\lambda$ balances the two supervision sources. Unless otherwise specified, all generation methods use the strongest baseline (\emph{Soft-Only}) as training, while our approach adopts FADRM for generation and \name{} for training. 

\subsection{Main Result}

As shown in Table~\ref{main_tab:tiny_imagenet} and Table~\ref{main_tab:imagenet1k}, our method consistently achieves superior performance. With SLI$=5$ and a 50-IPC distilled dataset (SLC=250), \name{} reaches 49.5\% Top-1 accuracy on ImageNet-1K, surpassing the previous SOTA LPLD by +10.1\%, thereby validating its effectiveness.

\begin{table}[htbp]
    \caption{Performance comparison under identical storage and training budgets, highlighting HALD’s advantage through stage-wise soft–hard integration. JO. denotes the Joint Objective method.}
    \vspace{-.05in}
    \label{tab:baseline}
    \centering
    \renewcommand{\arraystretch}{1}
    \resizebox{1\columnwidth}{!}{  
    \begin{tabular}{lcccccc}
    \toprule
    \cjc{SLC} & \cjc{GIFT} & \cjc{JO. $\lambda=1$} & \cjc{JO. $\lambda=0.1$} & \cjc{JO. $\lambda=0.01$} & \cjc{Soft Only} & \cjc{Ours} \\
    \midrule
    \cjc{100} & \cjc{27.0} & \cjc{5.9} & \cjc{7.0} & \cjc{13.1} & \cjc{26.9} & \textbf{\cjc{43.5}} \\
    \cjc{200} & \cjc{39.1} & \cjc{8.1} & \cjc{9.4} & \cjc{17.5} & \cjc{39.2} & \textbf{\cjc{47.3}} \\
    \cjc{300} & \cjc{46.7} & \cjc{9.5} & \cjc{10.3} & \cjc{20.1} & \cjc{46.6} & \textbf{\cjc{50.7}} \\
    \bottomrule
    \end{tabular}
}
\vspace{-.1in}
\end{table}

\begin{wraptable}{r}{0.3\linewidth}
    \vspace{-.17in}
    \caption{Comparison with LPQLD.}
    \vspace{-.05in}
    \label{tab:compare-with-lpqld}
    \centering
    \resizebox{1\linewidth}{!}{  
        \begin{tabular}{cc}
            \toprule
            LPQLD & \name{} \\
            \midrule 
             32.8 & \textbf{36.2} \\ 
             42.3 & \textbf{43.9} \\ 
            \bottomrule
        \end{tabular}
    }
    \vspace{-.1in}
\end{wraptable}
\textbf{Comparison with more baselines.} As shown in Table~\ref{tab:baseline}, \name{} achieves the best overall performance, while GIFT is comparable to the soft-only baseline. For the Joint Objective, performance peaks at $\lambda=0$ and degrades with increasing $\lambda$, indicating gradient inconsistency from mixing hard and soft supervision. By contrast, HALD’s stage-wise design mitigates this conflict and yields consistent gains. We further compare \name{} with LPQLD~\citep{xiao2026soft} in Table~\ref{tab:compare-with-lpqld}, where the two rows correspond to IPC=10 and IPC=20 respectively, and \name{} consistently outperforms LPQLD across both settings.

\textbf{Storage Efficiency.} Table~\ref{tab:eff_compare} reports performance under varying soft label storage budgets. While LPLD degrades sharply with tighter constraints, \name{} maintains strong accuracy (e.g., 36.9\% at 95M, a +22.6\% improvement over LPLD at the same budget). This demonstrates the storage efficiency of our calibration strategy, which effectively enhances the utility of stored soft labels under limited capacity.

\textbf{Results on more datasets.} To evaluate generalization beyond Tiny-ImageNet and ImageNet-1K, we additionally test \name{} on CIFAR-100 and ImageWoof, covering both general and fine-grained recognition settings. As shown in Table~\ref{tab:more_datasets}, \name{} consistently outperforms the Soft-Only baseline across all settings.

\begin{table}[htbp]
    \vspace{-.05in}
    \caption{Storage \emph{vs.} Effectiveness. (ImageNet-1K IPC=50)}
    \label{tab:eff_compare}
    \vspace{-.05in}
    \centering
    \renewcommand{\arraystretch}{1}
    \resizebox{0.9\columnwidth}{!}{  
    \begin{tabular}{lllllll}
    \toprule
     & 570M &475M & 380M &285M& 190M & 95M \\
    \midrule
    LPLD          & 43.1              & 39.4               & 36.9             &   33.7             & 25.5 & 14.3  \\
    \textbf{Ours} & 50.3 \gainup{7.2} & 49.5 \gainup{10.1} & 47.7 \gainup{10.8}& 42.7 \gainup{9.0} & 40.9 \gainup{15.4} & 36.9 \gainup{22.6} \\
    \bottomrule
    \end{tabular}
}
\vspace{-.05in}
\end{table}

\begin{table}[htbp]
\centering
\caption{Results on CIFAR-100 and ImageWoof under IPC=10.}
\vspace{-.05in}
\label{tab:more_datasets}
\resizebox{0.6\linewidth}{!}{
\begin{tabular}{llccc}
\toprule
\textbf{Dataset} & \textbf{SLC} & \textbf{Soft-Only} & \textbf{HALD} \\
\midrule
\multirow{2}{*}{CIFAR-100}  & 10 & 10.6 & \textbf{21.0} \gainup{10.4} \\
                             & 20 & 16.5 & \textbf{26.0} \gainup{9.5}  \\
\midrule
\multirow{2}{*}{ImageWoof}  & 10 & 23.5 & \textbf{27.4} \gainup{3.9}  \\
                             & 20 & 27.4 & \textbf{29.3} \gainup{1.9}  \\
\bottomrule
\end{tabular}}
\vspace{-1em}
\end{table}

\subsection{Ablation}

\textbf{How Long to Use Hard Labels.}  
We validate our theoretical assumption that the total soft-label phase should match the convergence time of standalone soft-label training. As shown in Table~\ref{tab:how_long_hard_labels}, extending this phase to its full predefined length consistently improves results.

\begin{table}[htbp]
\caption{Impact of soft-label phase length on performance.}
    \vspace{-.05in}
    \label{tab:how_long_hard_labels}
    \centering
    \renewcommand{\arraystretch}{1}
    \resizebox{0.7\columnwidth}{!}{  
        \begin{tabular}{@{}l|p{0.8cm}p{0.8cm}p{0.8cm}p{0.8cm}@{}}
        \toprule
        & \multicolumn{4}{c}{Soft-Label Phase Length (epochs)} \\
        \cmidrule(lr){2-5}
        Method & 100 & 150 & \textbf{200} & 250 \\
        \midrule
        FADRM & 31.3 & 34.8 & \textbf{35.6} & 29.3 \\
        RDED  & 16.6 & 23.5  & \textbf{24.4} & 22.5  \\
        LPLD  & 24.1 & 27.1  & \textbf{30.0} & 26.3  \\
        SRe$^2$L  & 26.7 & 30.9  & \textbf{31.9} & 26.4  \\
        \midrule
        \multicolumn{5}{c}{\emph{Soft-label convergence length = \textbf{200} epochs}} \\
        \bottomrule
        \end{tabular}
}
    \vspace{-.05in}
\end{table}

\textbf{Impact of Hard-Label Calibration.}  
To assess the generality of HALD, we compare the \emph{Soft–Hard–Soft} (ours) schedule with \emph{Soft–Only} across multiple distillation techniques (Table~\ref{table:main_ablation}). 
Consistent gains across methods confirm the benefit of hard-label calibration, especially under low-SLC regimes where LVSD is more pronounced.

\begin{table*}[t]
    \centering
    \caption{Comprehensive ablation of the impact of incorporating hard-label supervision across state-of-the-art dataset distillation methods on ImageNet-1K and Tiny-ImageNet. All models are trained for 300 epochs under identical hyperparameters, with the evaluation protocol being the sole difference. $^\dagger$ denotes values reported by the corresponding original sources.} 
    \vspace{-.05in}
    \label{table:main_ablation}
    \resizebox{0.8\textwidth}{!}{
        \begin{tabular}{@{}ccc|llllll|ll@{}}
            \toprule                     
            &    &  &\multicolumn{6}{c|}{ImageNet-1K}  & \multicolumn{2}{c}{Tiny-ImageNet} \\ 
            \midrule
            \multicolumn{1}{c}{IPC} & \multicolumn{1}{c}{Generation} & \multicolumn{1}{c|}{Evaluation} & \multicolumn{1}{l}{SLC=300} & \multicolumn{1}{l}{SLC=250} & \multicolumn{1}{c}{SLC=200} & \multicolumn{1}{c}{SLC=150} & \multicolumn{1}{c}{SLC=100} & \multicolumn{1}{c|}{SLC=50} & \multicolumn{1}{c}{SLC=100} & \multicolumn{1}{c}{SLC=50} \\ 
            \midrule

            & \multirow{2}{*}{SRe$^2$L} & Soft-Only & 37.1 & 36.5 & 34.8 & 30.4 & 25.9 &14.5 &31.4&24.0\\
            &  & \cellcolor[HTML]{EFEFEF}\textbf{Ours} 
              & 37.5 \cellcolor[HTML]{EFEFEF}\gainup{ 0.4} 
              & 37.3 \cellcolor[HTML]{EFEFEF}\gainup{ 0.8} 
              & 37.0 \cellcolor[HTML]{EFEFEF}\gainup{ 2.2}
              & 33.8 \cellcolor[HTML]{EFEFEF}\gainup{ 3.4} 
              & \cellcolor[HTML]{EFEFEF}31.9 \gainup{ 6.0} 
              & \cellcolor[HTML]{EFEFEF}26.7 \gainup{ 12.2} 
              & \cellcolor[HTML]{EFEFEF}31.9 \gainup{ 0.5}
              & \cellcolor[HTML]{EFEFEF}25.5 \gainup{ 1.5}\\

            & \multirow{2}{*}{RDED} & Soft-Only & 29.8 & 28.5 & 28.3 & 25.7 & 19.9 & 11.8 &27.6&22.3 \\
            & & \cellcolor[HTML]{EFEFEF}\textbf{Ours} 
              & \cellcolor[HTML]{EFEFEF}30.4 \gainup{0.6} 
              & \cellcolor[HTML]{EFEFEF}29.2 \gainup{0.7} 
              & \cellcolor[HTML]{EFEFEF}29.8 \gainup{1.5} 
              & \cellcolor[HTML]{EFEFEF}25.9 \gainup{0.2}
              & \cellcolor[HTML]{EFEFEF}24.4 \gainup{4.5}
              & \cellcolor[HTML]{EFEFEF}20.9 \gainup{9.1} 
              & \cellcolor[HTML]{EFEFEF}31.0 \gainup{3.4}
              & \cellcolor[HTML]{EFEFEF}27.0 \gainup{4.7}\\

            & \multirow{2}{*}{LPLD} & Soft-Only & 32.7 $^\dagger$ & 35.1 & 32.3 & 28.6$^\dagger$ & 23.1$^\dagger$ & 14.5 & 31.5 &23.5\\
            & & \cellcolor[HTML]{EFEFEF}\textbf{Ours} 
              & \cellcolor[HTML]{EFEFEF}36.2 \gainup{ 3.5} 
              & \cellcolor[HTML]{EFEFEF}36.7 \gainup{ 1.6} 
              & \cellcolor[HTML]{EFEFEF}34.4 \gainup{ 2.1}  
              & \cellcolor[HTML]{EFEFEF}30.8 \gainup{ 2.2} 
              & \cellcolor[HTML]{EFEFEF}30.0 \gainup{ 6.9} 
              & \cellcolor[HTML]{EFEFEF}26.3 \gainup{ 11.8} 
              & \cellcolor[HTML]{EFEFEF}32.6 \gainup{ 1.1}
              & \cellcolor[HTML]{EFEFEF}26.5 \gainup{ 3.0}\\

            & \multirow{2}{*}{FADRM} & Soft-Only & 42.0 & 41.4 & 39.0 & 34.1 & 27.9 & 16.1 & 34.4 &28.1 \\
            \multirow{-8}{*}{IPC=10} & &\cellcolor[HTML]{EFEFEF}\textbf{Ours}
              & \cellcolor[HTML]{EFEFEF}43.0 \gainup{1.0} 
              & \cellcolor[HTML]{EFEFEF}42.0 \gainup{0.6} 
              & \cellcolor[HTML]{EFEFEF}40.7 \gainup{1.7}  
              & \cellcolor[HTML]{EFEFEF}39.0 \gainup{4.9} 
              & \cellcolor[HTML]{EFEFEF}35.6 \gainup{ 7.7} 
              & \cellcolor[HTML]{EFEFEF}30.3 \gainup{ 14.2}  
              & \cellcolor[HTML]{EFEFEF}36.2 \gainup{ 1.8}
              & \cellcolor[HTML]{EFEFEF}30.7 \gainup{ 2.6}\\
            \midrule

            & \multirow{2}{*}{SRe$^2$L} & Soft-Only & 39.2 & 38.1 & 35.1 & 29.4 & 25.4 & 12.8 &30.9 & 22.0\\
            & & \cellcolor[HTML]{EFEFEF}\textbf{Ours} 
              & \cellcolor[HTML]{EFEFEF}42.3 \gainup{ 3.1} 
              & \cellcolor[HTML]{EFEFEF}41.9 \gainup{ 3.8} 
              & \cellcolor[HTML]{EFEFEF}40.7 \gainup{ 5.6}
              & \cellcolor[HTML]{EFEFEF}37.3 \gainup{ 7.9} 
              & \cellcolor[HTML]{EFEFEF}35.9 \gainup{ 10.5} 
              & \cellcolor[HTML]{EFEFEF}27.5 \gainup{ 14.7} 
              & \cellcolor[HTML]{EFEFEF}32.7 \gainup{ 1.8} 
              & \cellcolor[HTML]{EFEFEF}24.5 \gainup{ 2.5}\\

            & \multirow{2}{*}{RDED} & Soft-Only & 35.2 & 33.2 & 29.1 & 26.6 &21.0 & 10.8 &30.1&20.7\\
            & & \cellcolor[HTML]{EFEFEF}\textbf{Ours} 
              & \cellcolor[HTML]{EFEFEF}39.8 \gainup{ 4.6} 
              & \cellcolor[HTML]{EFEFEF}38.3 \gainup{ 5.1} 
              & \cellcolor[HTML]{EFEFEF}36.8 \gainup{ 7.7} 
              & \cellcolor[HTML]{EFEFEF}34.6 \gainup{ 8.0} 
              & \cellcolor[HTML]{EFEFEF}33.5 \gainup{ 12.5} 
              & \cellcolor[HTML]{EFEFEF}24.1 \gainup{ 13.3}
              & \cellcolor[HTML]{EFEFEF}33.0 \gainup{ 2.9}
              & \cellcolor[HTML]{EFEFEF}25.2 \gainup{ 4.5}\\

            & \multirow{2}{*}{LPLD} & Soft-Only & 41.0$^\dagger$ & 38.5 & 35.9$^\dagger$ & 33.0$^\dagger$ & 24.0 & 11.7 & 35.2 &21.2\\
            & & \cellcolor[HTML]{EFEFEF}\textbf{Ours} 
              & \cellcolor[HTML]{EFEFEF}43.9 \gainup{ 2.9} 
              & \cellcolor[HTML]{EFEFEF}40.8 \gainup{ 2.3} 
              & \cellcolor[HTML]{EFEFEF}40.3 \gainup{ 4.4}  
              & \cellcolor[HTML]{EFEFEF}38.3 \gainup{ 5.3} 
              & \cellcolor[HTML]{EFEFEF}35.9 \gainup{ 11.9} 
              & \cellcolor[HTML]{EFEFEF}24.9 \gainup{ 13.2} 
              & \cellcolor[HTML]{EFEFEF}36.0 \gainup{ 0.8}
              & \cellcolor[HTML]{EFEFEF}24.6 \gainup{ 3.4}\\

             & \multirow{2}{*}{FADRM} & Soft-Only& 45.9& 44.3 & 40.1 & 35.3 & 29.5  &13.9 & 37.0 &26.8\\
            \multirow{-8}{*}{IPC=20} & & \cellcolor[HTML]{EFEFEF}\textbf{Ours} 
              & \cellcolor[HTML]{EFEFEF}48.3 \gainup{ 2.4} 
              & \cellcolor[HTML]{EFEFEF}47.7 \gainup{ 3.4} 
              & \cellcolor[HTML]{EFEFEF}46.5 \gainup{ 6.4} 
              & \cellcolor[HTML]{EFEFEF}44.0 \gainup{ 8.7} 
              & \cellcolor[HTML]{EFEFEF}42.9 \gainup{ 13.4} 
              & \cellcolor[HTML]{EFEFEF}32.7 \gainup{ 18.8}
              & \cellcolor[HTML]{EFEFEF}37.6 \gainup{ 0.6}
              & \cellcolor[HTML]{EFEFEF}29.8 \gainup{ 3.0}\\
            \midrule

            & \multirow{2}{*}{SRe$^2$L} & Soft-Only & 41.2 & 39.5 & 36.3 & 30.6 & 25.2  &14.4 &31.9 & 24.0\\
            && \cellcolor[HTML]{EFEFEF}\textbf{Ours} 
              & \cellcolor[HTML]{EFEFEF}43.5 \gainup{ 2.3} 
              & \cellcolor[HTML]{EFEFEF}42.3 \gainup{ 2.8} 
              & \cellcolor[HTML]{EFEFEF}41.5 \gainup{ 5.2}  
              & \cellcolor[HTML]{EFEFEF}35.2 \gainup{ 4.6} 
              & \cellcolor[HTML]{EFEFEF}32.7 \gainup{ 7.5} 
              & \cellcolor[HTML]{EFEFEF}29.7 \gainup{ 15.3}
              & \cellcolor[HTML]{EFEFEF}32.9 \gainup{ 1.0}
              & \cellcolor[HTML]{EFEFEF}26.0 \gainup{ 2.0}\\

            & \multirow{2}{*}{RDED} & Soft-Only & 37.3 & 34.9 & 31.9 & 27.3 & 21.0 & 12.6
            &27.9 & 20.5\\
            && \cellcolor[HTML]{EFEFEF} \textbf{Ours} 
              & \cellcolor[HTML]{EFEFEF}46.0 \gainup{ 8.7} 
              & \cellcolor[HTML]{EFEFEF}45.1 \gainup{ 10.2} 
              & \cellcolor[HTML]{EFEFEF}43.4 \gainup{ 11.5}  
              & \cellcolor[HTML]{EFEFEF}42.3 \gainup{15.0} 
              & \cellcolor[HTML]{EFEFEF}38.9 \gainup{ 17.9} 
              & \cellcolor[HTML]{EFEFEF}35.5 \gainup{ 22.9} 
              & \cellcolor[HTML]{EFEFEF}31.1 \gainup{ 3.2}
              & \cellcolor[HTML]{EFEFEF}26.4 \gainup{ 5.9} \\

            & \multirow{2}{*}{LPLD} & Soft-Only & 43.1$^\dagger$ & 39.4 & 36.9 & 33.7$^\dagger$  & 25.5 & 14.3 & 34.4 & 24.1\\
            && \cellcolor[HTML]{EFEFEF}\textbf{Ours} 
              & \cellcolor[HTML]{EFEFEF}44.9 \gainup{ 1.8} 
              & \cellcolor[HTML]{EFEFEF}43.6 \gainup{ 4.2} 
              & \cellcolor[HTML]{EFEFEF}42.5 \gainup{ 5.6} 
              & \cellcolor[HTML]{EFEFEF}36.6 \gainup{ 2.9} 
              & \cellcolor[HTML]{EFEFEF}34.2 \gainup{ 8.7} 
              & \cellcolor[HTML]{EFEFEF}28.3 \gainup{ 14.0}
              & \cellcolor[HTML]{EFEFEF}36.3 \gainup{ 1.9}
              & \cellcolor[HTML]{EFEFEF}27.8 \gainup{ 3.7}\\

            & \multirow{2}{*}{FADRM} & Soft-Only & 47.7 & 45.5 & 40.4 & 34.8 & 28.5 & 18.7 & 36.0 &27.8\\
            \multirow{-8}{*}{IPC=50} && \cellcolor[HTML]{EFEFEF}\textbf{Ours} 
              & \cellcolor[HTML]{EFEFEF}50.3 \gainup{ 2.6} 
              & \cellcolor[HTML]{EFEFEF}49.5 \gainup{ 4.0} 
              & \cellcolor[HTML]{EFEFEF}47.7 \gainup{ 7.3} 
              & \cellcolor[HTML]{EFEFEF}42.7 \gainup{ 7.9} 
              & \cellcolor[HTML]{EFEFEF}40.9 \gainup{ 12.4} 
              & \cellcolor[HTML]{EFEFEF}36.9 \gainup{ 18.2} 
              & \cellcolor[HTML]{EFEFEF}38.2 \gainup{ 2.2}
              & \cellcolor[HTML]{EFEFEF}30.7 \gainup{ 2.9}\\
            \bottomrule
        \end{tabular}
        }
    \vspace{-.1in}
\end{table*}

\textbf{When to Switch to Hard Labels.}
To assess the effectiveness of the proposed paradigm, we evaluate \name{} under four training schedules. As shown in Table~\ref{tab:switch}, \emph{Soft–Hard–Soft} achieves the highest accuracy, indicating that introducing hard labels mid-training is most effective, consistent with Theorem~\ref{thm:soft_to_hard_alignment_dual_concise2} and its prediction of stronger gradient alignment after partial training.

\begin{table}[htbp]
    \vspace{.2em}
    \caption{Impact of different training schedules.}
    \vspace{-.05in}
    \label{tab:switch}
    \centering
    \renewcommand{\arraystretch}{1}
    \resizebox{0.77\columnwidth}{!}{  
    \begin{tabular}{cccc}
    \toprule
   \emph{Hard-Soft} &\emph{Soft-Hard} & \emph{Hard-Soft-Hard} & \emph{Soft-Hard-Soft} \\
    \midrule
     17.0 \% & 14.2 \% & 11.3 \%& \textbf{35.6} \%  \\
    \bottomrule
    \end{tabular}
}
    \vspace{-.1in}
\end{table}

\textbf{Effect of the first and last soft-label stages.} As shown in Table~\ref{tab:phase_diff}, a balanced allocation between the first and last soft-label phases yields the best performance. Overweighting the first phase leaves insufficient budget for the post-calibration recovery of fine-grained teacher semantics, whereas overweighting the last phase initiates hard-label calibration before the model has converged, disrupting the soft-to-hard transition and degrading soft--hard gradient alignment. The two failure modes ultimately reflect a single underlying trade-off between early-stage representation learning and late-stage semantic refinement, which we resolve by allocating the soft-label budget evenly across the two phases, consistent with our theoretical schedule.

\begin{table}[htbp]
    \vspace{-.05in}
\caption{\cjc{Effect of first and last soft-label phase durations.}}
    \vspace{-.05in}
    \label{tab:phase_diff}
    \centering
    \renewcommand{\arraystretch}{1}
    \resizebox{0.65\columnwidth}{!}{  
    \begin{tabular}{ccc}
    \toprule
    First Soft Duration & Last Soft Duration & Acc \\
    \midrule
    \cjc{50}  & \cjc{100} & \cjc{35.2} \\
    \cjc{100} & \cjc{50}  & \cjc{34.7} \\
    \cjc{75}  & \cjc{75}  & \textbf{\cjc{35.6}} \\
    \bottomrule
    \end{tabular}
}
    \vspace{-.1in}
\end{table}

\textbf{Effect of Label-Smoothing ($\alpha$).} 
As shown in Table~\ref{tab:smoothing-rate}, $\alpha=0.8$ is optimal across generation methods. We attribute this to the fact that synthetic data carries less clear semantics than real data, so stronger label flattening is needed to produce smoother, higher-entropy targets for stable calibration, leading to more stable optimization.

\begin{table}[h]
    \caption{Effect of Label-Smoothing Rate.}
    \vspace{-.05in}
    \label{tab:smoothing-rate}
    \centering
    \renewcommand{\arraystretch}{1}
    \resizebox{0.7\columnwidth}{!}{  
    \begin{tabular}{@{}ccccccc@{}}
    \toprule
     & 0.0    & 0.2  & 0.4  & 0.6  & 0.8 &0.9\\ \midrule
    FADRM    & 35.0 & 35.2 & 35.2 & 35.4 & \textbf{35.6} & 35.3 \\
    LPLD     & 26.9 & 27.3 & 27.5 & 27.9 & \textbf{30.0} & 27.9  \\
    SRe$^2$L & 29.6 & 30.0 & 30.9 & 31.5 & \textbf{31.9} & 31.3 \\
    RDED     & 21.9 & 22.9 & 23.2 & 24.0 & \textbf{24.4} & 23.8 \\
     \bottomrule
    \end{tabular}
}
\end{table}

\subsection{Analysis}

\textbf{LVSD Quantification.} To quantify LVSD across teacher models, we define the \emph{LVSD ratio} $R(\tilde{x})=\frac{\mathrm{Tr}(\hat{\Sigma}_{\text{strong}})}{\mathrm{Tr}(\hat{\Sigma}_{\text{weak}})+\varepsilon}$, where the numerator captures prediction variance under \emph{strong} (local-view) augmentations and the denominator under \emph{weak} (global-view) augmentations. Thus, $R(\tilde{x})$ measures semantic drift between local and global views. As shown in Table~\ref{tab:lvsd}, $R$ is consistently large across backbones, indicating pronounced LVSD and motivating semantic calibration under limited soft-label coverage.

\begin{table}[htbp]
    \vspace{-.05in}
    \caption{Quantitative analysis of Local-View Semantic Drift (LVSD) across teacher models. A larger $R$ indicates higher prediction variance under local views relative to global, confirming that LVSD is substantial across architectures.}
    \vspace{-.05in}
    \label{tab:lvsd}
    \centering
    \renewcommand{\arraystretch}{1}
    \resizebox{1\columnwidth}{!}{  
        \begin{tabular}{lcccc}
        \toprule
        \cjc{Teacher} & \cjc{Mean $\mathrm{Tr}(\hat{\Sigma}_{\text{weak}})$} & \cjc{Mean $\mathrm{Tr}(\hat{\Sigma}_{\text{strong}})$} & \cjc{$\log_{10}(\text{Mean } R)$} & \cjc{$\mathbf{p}(R > 1)$} \\
        \midrule
        \cjc{ResNet-18} & \cjc{$4.84\times10^{-15}$} & \cjc{0.0102} & \cjc{3.27} & \cjc{97.2\%} \\
        \cjc{MobileNetV2} & \cjc{$4.42\times10^{-15}$} & \cjc{0.3756} & \cjc{5.28} & \cjc{99.2\%} \\
        \cjc{ShuffleNetV2} & \cjc{$4.51\times10^{-15}$} & \cjc{0.0591} & \cjc{5.35} & \cjc{98.0\%} \\
        \bottomrule
        \end{tabular}
}
\end{table}

\textbf{Semantic Calibration.}
To verify that HALD mitigates semantic drift and improves generalization, we analyze crop-level consistency and prediction alignment against a reference model trained with full soft-label coverage. Crop-level consistency measures agreement across crops of the same image via Jensen--Shannon divergence and cosine similarity before and after Stage~B, while prediction alignment assesses how closely student predictions (with/without hard calibration) match the reference on unseen data. As shown in Table~\ref{tab:semantic-calibration}, HALD improves both metrics, confirming the effectiveness of hard-label calibration in reducing semantic drift and enhancing performance.

\begin{table}[htbp]
\centering
\caption{
\textbf{Top:} Crop-level consistency before and after Stage~B.
\textbf{Bottom:} Prediction alignment with a reference model trained under full soft-label coverage on unseen data.}
\vspace{-.05in}
\label{tab:semantic-calibration}
\begin{subtable}[t]{0.43\columnwidth}
    \centering
    \resizebox{\columnwidth}{!}{
        \begin{tabular}{lc}
        \toprule
        Stage & \cjc{Cos. Sim.} \\
        \midrule
        Before Stage B & 0.74 \\
        After Stage B  & \textbf{0.96} \\
        \bottomrule
        \end{tabular}
    }
\end{subtable}
\hfill
\begin{subtable}[t]{0.5\columnwidth}
    \centering
    \resizebox{\columnwidth}{!}{
        \begin{tabular}{lc}
        \toprule
         & Cos. Sim. \\
        \midrule
        w/o Hard Calibration & 0.46 \\
        w/ Hard Calibration  & \textbf{0.62} \\
        \bottomrule
        \end{tabular}
    }
\end{subtable}
\end{table}

\textbf{Cross-Architecture Generalization.}
To assess backbone-agnostic effectiveness, we evaluate \name{} across heterogeneous backbones, ranging from lightweight networks to larger architectures. As reported in Table~\ref{tab:cross-arch}, \name{} yields consistent improvements across all backbones examined, demonstrating that the advantages of our method are not tied to specific architectural choices. For instance, \name{} improves ShuffleNetV2 by +3.1\%, indicating substantial gains even for compact models with limited capacity. These results collectively suggest that the benefits of our training paradigm are architecture-agnostic, robust to variations in network complexity, and scale favorably with parameter counts and model capacities.

\begin{table}[htbp]
    \caption{Results on cross-architecture generalization, showing Top-1 accuracy (\%) with IPC$=$10 under SLC$=$100.}
    \vspace{-.05in}
    \label{tab:cross-arch}
    \centering
    \renewcommand{\arraystretch}{1}
    \resizebox{0.9\columnwidth}{!}{  
        \begin{tabular}{lccccc}
    \toprule
    Model & \#Params & RDED & LPLD & FADRM & \textbf{Ours} \\
    \midrule
    ResNet-18 & 11.7M & 19.9 & 23.1 & 27.9 &  \textbf{35.6}  \gainup{7.7} \\
    ResNet-50 & 25.6M &25.2 & 27.3&36.1 &  \textbf{38.0} \gainup{1.9} \\
    EfficientNet-B0 & 39.6M & 19.5 & 26.1 & 36.4 &   \textbf{37.8} \gainup{1.4} \\
    MobileNetV2 & 3.4M &17.3 &25.1 & 34.2&   \textbf{35.3} \gainup{1.1} \\
    DenseNet121 & 8.0M & 28.3& 36.7& 43.3&   \textbf{44.3} \gainup{1.0} \\
    ShuffleNetV2-0.5x & 1.4M &17.9 &21.1 & 29.2  &  \textbf{32.3} \gainup{3.1} \\
    \cjc{Vit-Tiny} & \cjc{13M} & \cjc{3.2} & \cjc{3.8} & \cjc{5.6}  &  \cjc{\textbf{8.9}} \gainup{3.3} \\
    \cjc{VGG-11} & \cjc{133M} & \cjc{26.3} & \cjc{28.9} & \cjc{31.0}  &  \cjc{\textbf{33.6}} \gainup{2.6} \\
    \cjc{VGG-16} & \cjc{138M} & \cjc{28.9} & \cjc{34.3} & \cjc{36.2}  &  \cjc{\textbf{37.4}} \gainup{1.2} \\
    \bottomrule
    \end{tabular}
}
\end{table}

\textbf{Prediction Consistency with Teacher Model.}
We compare the prediction consistency on unseen data between models trained with and without the final soft-label refinement stage, to empirically demonstrate the performance gains introduced by this phase. As shown in Table~\ref{tab:refinement_comparison}, prediction alignment with the teacher on unseen data improves notably after the final refinement stage.

\begin{table}[htbp]
\centering
\caption{Comparison of prediction consistency with the teacher model on unseen data, w/ and w/o the final soft-label refinement.}
    \resizebox{0.9\linewidth}{!}{
\begin{tabular}{lcc}
\toprule
\cjc{\textbf{Method}} & \cjc{\textbf{JS Divergence}} & \cjc{\textbf{Cosine Similarity}} \\
\midrule
\cjc{w/o final soft-label refinement} & \cjc{0.61} & \cjc{19.8} \\
\cjc{w/ final soft-label refinement}  & \cjc{0.38} & \cjc{43.8} \\
\bottomrule
\end{tabular}
}
\label{tab:refinement_comparison}
\end{table}

\textbf{Conventional Large-Scale Dataset Experiment.}
To evaluate generalization beyond synthetic data, we test \name{} on a randomly sampled subset of ImageNet-1K. As shown in Table~\ref{tab:real_dataset_sample}, hard-label supervision consistently improves performance, demonstrating robustness under realistic visual conditions and effective transfer to large-scale, real-world benchmarks. These results highlight the practical applicability and reliability of \name{} in real-data settings.

\begin{table}[htbp]
    \caption{\emph{Soft-Only} vs. \name{} on real dataset.}
    \vspace{-.05in}
    \label{tab:real_dataset_sample}
    \label{tab:real_exp}
    \centering
    \renewcommand{\arraystretch}{1}
    \resizebox{0.7\columnwidth}{!}{  
   \begin{tabular}{@{}c|cc|cc@{}}
    \toprule
     & \multicolumn{2}{c|}{SLC=100 (190 MB)} & \multicolumn{2}{c}{SLC=50 (95 MB)} \\
    ResNet-18 & \emph{Soft-Only}  & \textbf{Ours} & \emph{Soft-Only}  & \textbf{Ours}    \\ \midrule
    IPC=10    & 29.9 & \textbf{34.4} & 26.9 & \textbf{28.6}  \\
    IPC=50    & 26.9 & \textbf{44.7} & 19.1 & \textbf{40.9} \\
    \bottomrule
    \end{tabular}
}
\end{table}

\section{Visualization}
To further illustrate the effect of hard label calibration, we visualize the learned feature 
representations via UMAP~\citep{mcinnes2018umap} in Figure~\ref{fig:umap}. Under Soft-Only 
training, the feature space exhibits a chaotic, entangled structure with poor inter-class 
separation (Silhouette $= 0.162$), reflecting the semantic inconsistency induced by 
limited soft-label coverage. In contrast, the model trained using \name{} produces compact, well-separated clusters 
(Silhouette $= 0.636$), providing clear visual evidence that hard-label calibration 
effectively mitigates local semantic drift and yields significantly more discriminative 
representations.

\begin{figure}[htbp]
    \centering
    \includegraphics[width=0.95\linewidth]{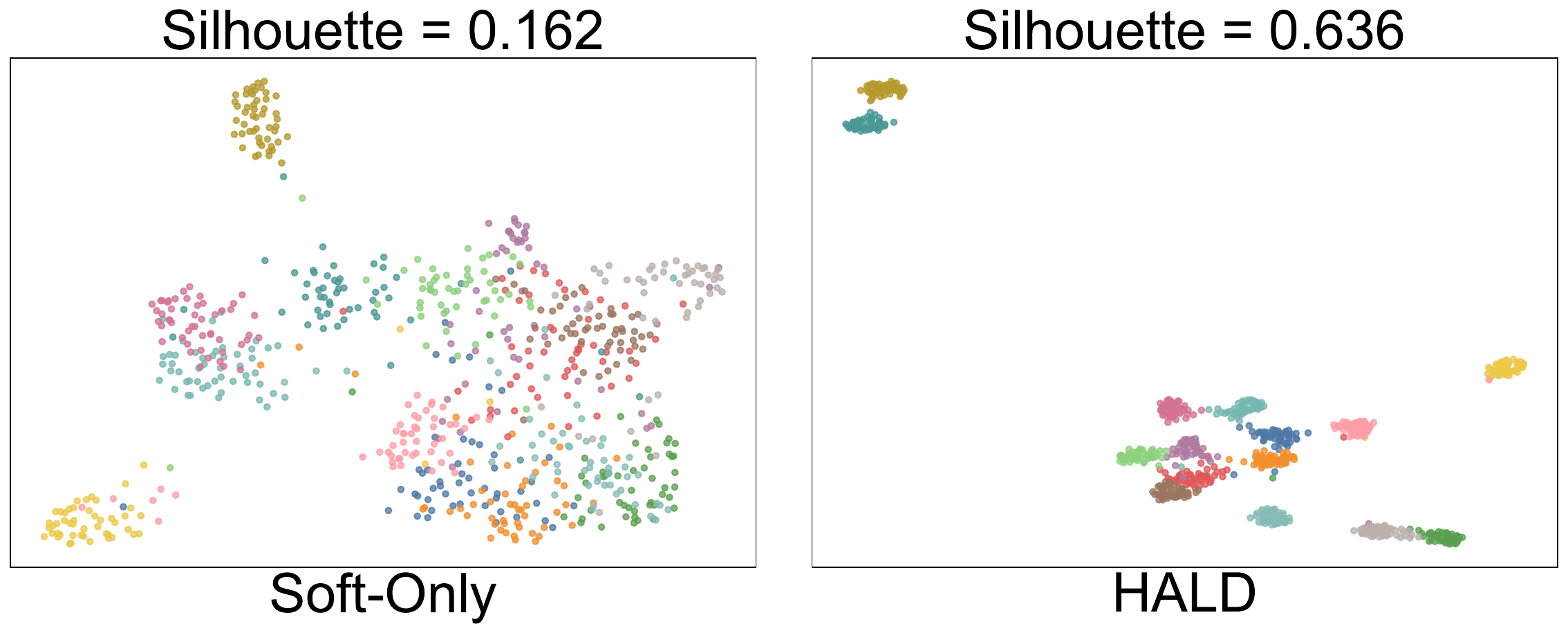}
    \caption{UMAP visualization of feature representations. Soft-Only yields entangled clusters, while \name{} produces well-separated structures, confirming reduced semantic drift.}
    \label{fig:umap}
\end{figure}

\section{Conclusion}
In this work, we revisited the limits of soft-label supervision in dataset distillation under tight storage budgets and identified \emph{local semantic drift} as a core failure mode when only a few per-image crops (and thus soft labels) are retained. We showed theoretically that the expected objective mismatch between reduced-crop and sufficient-crop training admits a strictly positive lower bound that scales inversely with the number of crops, and we proved that combining soft and hard labels does not introduce gradient inconsistency. Building on these insights, we proposed a lightweight calibration paradigm \name{}, where hard labels act as content-agnostic anchors that realign supervision while preserving the fine-grained benefits of soft labels. Experiments on large-scale settings (e.g., ImageNet-1K) demonstrate that \name{} mitigates drift, improves generalization and robustness, and substantially reduces storage overhead, providing a practical path toward scalable distillation.

\newpage
\clearpage
\section*{Acknowledgments}
This work is supported by the MBZUAI-WIS Joint Program for Artificial Intelligence Research.

\section*{Impact Statement}
This work improves the reliability and efficiency of dataset distillation by mitigating local semantic drift that can arise when soft-label supervision is sparse or crop-dependent. By reintroducing hard labels as a lightweight semantic anchor, our method can reduce reliance on storing massive teacher outputs, lowering label storage and energy costs and enabling broader access to distillation on hardware or devices. Potential risks include reinforcing noise or bias present in hard labels and over-calibrating toward dataset-specific system, we therefore encourage careful label checking, reporting of detailed performance, and ablations on label quality and class imbalance when deploying the approach.

\bibliography{main}

@article{sre2l_2024,
  title={Squeeze, recover and relabel: Dataset condensation at imagenet scale from a new perspective},
  author={Yin, Zeyuan and Xing, Eric and Shen, Zhiqiang},
  journal={Advances in Neural Information Processing Systems},
  volume={36},
  pages={73582--73603},
  year={2023}
}

@inproceedings{GBVSM_2024,
  title={Generalized large-scale data condensation via various backbone and statistical matching},
  author={Shao, Shitong and Yin, Zeyuan and Zhou, Muxin and Zhang, Xindong and Shen, Zhiqiang},
  booktitle={Proceedings of the IEEE/CVF Conference on Computer Vision and Pattern Recognition},
  pages={16709--16718},
  year={2024}
}

@inproceedings{RDED_2024,
  title={On the diversity and realism of distilled dataset: An efficient dataset distillation paradigm},
  author={Sun, Peng and Shi, Bei and Yu, Daiwei and Lin, Tao},
  booktitle={Proceedings of the IEEE/CVF Conference on Computer Vision and Pattern Recognition},
  pages={9390--9399},
  year={2024}
}

@article{xiao2026soft,
  title={Soft Label Pruning and Quantization for Large-Scale Dataset Distillation},
  author={Xiao, Lingao and He, Yang},
  journal={IEEE Transactions on Pattern Analysis and Machine Intelligence},
  year={2026},
  publisher={IEEE}
}

@inproceedings{mobilenetv2,
  title={Mobilenetv2: Inverted residuals and linear bottlenecks},
  author={Sandler, Mark and Howard, Andrew and Zhu, Menglong and Zhmoginov, Andrey and Chen, Liang-Chieh},
  booktitle={Proceedings of the IEEE conference on computer vision and pattern recognition},
  pages={4510--4520},
  year={2018}
}

@article{yin2023dataset,
  title={Dataset distillation via curriculum data synthesis in large data era},
  author={Yin, Zeyuan and Shen, Zhiqiang},
  journal={Transactions on Machine Learning Research},
  year={2024}
}

@inproceedings{cui2023scaling,
  title={Scaling up dataset distillation to imagenet-1k with constant memory},
  author={Cui, Justin and Wang, Ruochen and Si, Si and Hsieh, Cho-Jui},
  booktitle={International Conference on Machine Learning},
  pages={6565--6590},
  year={2023},
  organization={PMLR}
}

@inproceedings{deng2009imagenet,
  title={Imagenet: A large-scale hierarchical image database},
  author={Deng, Jia and Dong, Wei and Socher, Richard and Li, Li-Jia and Li, Kai and Fei-Fei, Li},
  booktitle={2009 IEEE conference on computer vision and pattern recognition},
  pages={248--255},
  year={2009},
  organization={Ieee}
}

@article{le2015tiny,
  title={Tiny imagenet visual recognition challenge},
  author={Le, Ya and Yang, Xuan},
  journal={CS 231N},
  volume={7},
  number={7},
  pages={3},
  year={2015}
}

@article{zhou2022dataset,
  title={Dataset distillation using neural feature regression},
  author={Zhou, Yongchao and Nezhadarya, Ehsan and Ba, Jimmy},
  journal={Advances in Neural Information Processing Systems},
  volume={35},
  pages={9813--9827},
  year={2022}
}

@inproceedings{
zhao2020dataset,
title={Dataset Condensation with Gradient Matching},
author={Bo Zhao and Konda Reddy Mopuri and Hakan Bilen},
booktitle={International Conference on Learning Representations},
year={2021},
}

@article{mcinnes2018umap,
  title={Umap: Uniform manifold approximation and projection for dimension reduction},
  author={McInnes, Leland and Healy, John and Melville, James},
  journal={arXiv preprint arXiv:1802.03426},
  year={2018}
}

@article{nguyen2021dataset,
  title={Dataset distillation with infinitely wide convolutional networks},
  author={Nguyen, Timothy and Novak, Roman and Xiao, Lechao and Lee, Jaehoon},
  journal={Advances in Neural Information Processing Systems},
  volume={34},
  pages={5186--5198},
  year={2021}
}

@article{loo2022efficient,
  title={Efficient dataset distillation using random feature approximation},
  author={Loo, Noel and Hasani, Ramin and Amini, Alexander and Rus, Daniela},
  journal={Advances in Neural Information Processing Systems},
  volume={35},
  pages={13877--13891},
  year={2022}
}

@article{deng2022remember,
  title={Remember the past: Distilling datasets into addressable memories for neural networks},
  author={Deng, Zhiwei and Russakovsky, Olga},
  journal={Advances in Neural Information Processing Systems},
  volume={35},
  pages={34391--34404},
  year={2022}
}

@inproceedings{zhao2021dataset,
  title={Dataset condensation with differentiable siamese augmentation},
  author={Zhao, Bo and Bilen, Hakan},
  booktitle={International Conference on Machine Learning},
  pages={12674--12685},
  year={2021},
  organization={PMLR}
}

@inproceedings{lee2022dataset,
  title={Dataset condensation with contrastive signals},
  author={Lee, Saehyung and Chun, Sanghyuk and Jung, Sangwon and Yun, Sangdoo and Yoon, Sungroh},
  booktitle={International Conference on Machine Learning},
  pages={12352--12364},
  year={2022},
  organization={PMLR}
}

@inproceedings{kim2022dataset,
  title={Dataset condensation via efficient synthetic-data parameterization},
  author={Kim, Jang-Hyun and Kim, Jinuk and Oh, Seong Joon and Yun, Sangdoo and Song, Hwanjun and Jeong, Joonhyun and Ha, Jung-Woo and Song, Hyun Oh},
  booktitle={International Conference on Machine Learning},
  pages={11102--11118},
  year={2022},
  organization={PMLR}
}

@inproceedings{DBLP:conf/wacv/ZhaoB23,
  title={Dataset condensation with distribution matching},
  author={Zhao, Bo and Bilen, Hakan},
  booktitle={Proceedings of the IEEE/CVF winter conference on applications of computer vision},
  pages={6514--6523},
  year={2023}
}

@inproceedings{wang2022cafe,
  title={Cafe: Learning to condense dataset by aligning features},
  author={Wang, Kai and Zhao, Bo and Peng, Xiangyu and Zhu, Zheng and Yang, Shuo and Wang, Shuo and Huang, Guan and Bilen, Hakan and Wang, Xinchao and You, Yang},
  booktitle={Proceedings of the IEEE/CVF Conference on Computer Vision and Pattern Recognition},
  pages={12196--12205},
  year={2022}
}

@article{liu2022dataset,
  title={Dataset distillation via factorization},
  author={Liu, Songhua and Wang, Kai and Yang, Xingyi and Ye, Jingwen and Wang, Xinchao},
  journal={Advances in Neural Information Processing Systems},
  volume={35},
  pages={1100--1113},
  year={2022}
}

@inproceedings{guo2024lossless,
  title={Towards lossless dataset distillation via difficulty-aligned trajectory matching},
  author={Guo, Ziyao and Wang, Kai and Cazenavette, George and Li, Hui and Zhang, Kaipeng and You, Yang},
  booktitle={International Conference on Learning Representations},
  volume={2024},
  pages={56999--57020},
  year={2024}
}

@inproceedings{Sajedi_2023_ICCV,
  title={Datadam: Efficient dataset distillation with attention matching},
  author={Sajedi, Ahmad and Khaki, Samir and Amjadian, Ehsan and Liu, Lucy Z and Lawryshyn, Yuri A and Plataniotis, Konstantinos N},
  booktitle={Proceedings of the IEEE/CVF International Conference on Computer Vision},
  pages={17097--17107},
  year={2023}
}

@article{zhou2024improve,
  title={Improve Cross-Architecture Generalization on Dataset Distillation},
  author={Zhou, Binglin and Zhong, Linhao and Chen, Wentao},
  journal={arXiv preprint arXiv:2402.13007},
  year={2024}
}

@inproceedings{xue2024towards,
  title={Towards adversarially robust dataset distillation by curvature regularization},
  author={Xue, Eric and Li, Yijiang and Liu, Haoyang and Wang, Peiran and Shen, Yifan and Wang, Haohan},
  booktitle={Proceedings of the AAAI Conference on Artificial Intelligence},
  volume={39},
  number={9},
  pages={9041--9049},
  year={2025}
}

@article{cui2025fadrm,
  title={FADRM: Fast and accurate data residual matching for dataset distillation},
  author={Cui, Jiacheng and Bi, Xinyue and Luo, Yaxin and Zhao, Xiaohan and Liu, Jiacheng and Shen, Zhiqiang},
  journal={Advances in Neural Information Processing Systems},
  volume={38},
  pages={62610--62655},
  year={2025}
}

@article{chen2023dataset,
  title={Dataset Distillation via Adversarial Prediction Matching},
  author={Chen, Mingyang and Huang, Bo and Lu, Junda and Li, Bing and Wang, Yi and Cheng, Minhao and Wang, Wei},
  journal={arXiv preprint arXiv:2312.08912},
  year={2023}
}

@article{shin2024frequency,
  title={Frequency domain-based dataset distillation},
  author={Shin, Donghyeok and Shin, Seungjae and Moon, Il-Chul},
  journal={Advances in Neural Information Processing Systems},
  volume={36},
  pages={70033--70044},
  year={2023}
}

@inproceedings{he2024multisize,
  title={Multisize Dataset Condensation},
  author={He, Yang and Xiao, Lingao and Zhou, Joey Tianyi and Tsang, Ivor},
  booktitle={The Twelfth International Conference on Learning Representations},
  year={2024}
}

@article{shao2024elucidating,
  title={Elucidating the design space of dataset condensation},
  author={Shao, Shitong and Zhou, Zikai and Chen, Huanran and Shen, Zhiqiang},
  journal={Advances in neural information processing systems},
  volume={37},
  pages={99161--99201},
  year={2024}
}

@inproceedings{shang2024gift,
  title={Gift: Unlocking full potential of labels in distilled dataset at near-zero cost},
  author={Shang, Xinyi and Sun, Peng and Lin, Tao},
  booktitle={International Conference on Learning Representations},
  volume={2025},
  pages={38712--38736},
  year={2025}
}

@article{qin2024label,
  title={A label is worth a thousand images in dataset distillation},
  author={Qin, Tian and Deng, Zhiwei and Alvarez-Melis, David},
  journal={Advances in Neural Information Processing Systems},
  volume={37},
  pages={131946--131971},
  year={2024}
}

@article{cui2025dataset,
  title={Dataset distillation via committee voting},
  author={Cui, Jiacheng and Li, Zhaoyi and Ma, Xiaochen and Bi, Xinyue and Luo, Yaxin and Shen, Zhiqiang},
  journal={arXiv preprint arXiv:2501.07575},
  year={2025}
}

@inproceedings{yu2024heavy,
  title={Heavy labels out! dataset distillation with label space lightening},
  author={Yu, Ruonan and Liu, Songhua and Chen, Zigeng and Ye, Jingwen and Wang, Xinchao},
  booktitle={Proceedings of the IEEE/CVF International Conference on Computer Vision},
  pages={5017--5026},
  year={2025}
}

@article{hinton2015distilling,
  title={Distilling the knowledge in a neural network},
  author={Hinton, Geoffrey and Vinyals, Oriol and Dean, Jeff},
  journal={arXiv preprint arXiv:1503.02531},
  year={2015}
}

@inproceedings{shen2022fast,
  title={A fast knowledge distillation framework for visual recognition},
  author={Shen, Zhiqiang and Xing, Eric},
  booktitle={European conference on computer vision},
  pages={673--690},
  year={2022},
  organization={Springer}
}

@inproceedings{shen2025delt,
  title={Delt: A simple diversity-driven earlylate training for dataset distillation},
  author={Shen, Zhiqiang and Sherif, Ammar and Yin, Zeyuan and Shao, Shitong},
  booktitle={Proceedings of the Computer Vision and Pattern Recognition Conference},
  pages={4797--4806},
  year={2025}
}

@inproceedings{gu2024efficient,
  title={Efficient Dataset Distillation via Minimax Diffusion},
  author={Gu, Jianyang and Vahidian, Saeed and Kungurtsev, Vyacheslav and Wang, Haonan and Jiang, Wei and You, Yang and Chen, Yiran},
  booktitle={Proceedings of the IEEE/CVF Conference on Computer Vision and Pattern Recognition (CVPR)},
  pages={15793--15803},
  year={2024}
}

@inproceedings{su2024d4m,
  title={{D4M}: Dataset Distillation via Disentangled Diffusion Model},
  author={Su, Duo and Hou, Junjie and Gao, Weizhi and Tian, Yingjie and Tang, Bowen},
  booktitle={Proceedings of the IEEE/CVF Conference on Computer Vision and Pattern Recognition (CVPR)},
  pages={5809--5818},
  year={2024}
}

@inproceedings{chen2025igd,
  title={Influence-Guided Diffusion for Dataset Distillation},
  author={Chen, Mingyang and Du, Jiawei and Huang, Bo and Wang, Yi and Zhang, Xiaobo and Wang, Wei},
  booktitle={Proceedings of the International Conference on Learning Representations (ICLR)},
  year={2025}
}

@inproceedings{zhao2025d3hr,
  title={Taming Diffusion for Dataset Distillation with High Representativeness},
  author={Zhao, Lin and Wu, Yushu and Jiang, Xinru and Gu, Jianyang and Wang, Yanzhi and Xu, Xiaolin and Zhao, Pu and Lin, Xue},
  booktitle={International Conference on Machine Learning},
  pages={77760--77780},
  year={2025},
  organization={PMLR}
}

@inproceedings{chan-santiago2025mgd3,
  title={{MGD3}: Mode-Guided Dataset Distillation using Diffusion Models}, 
  author={Chan-Santiago, Jeffrey A. and Tirupattur, Praveen and Nayak, Gaurav Kumar and Liu, Gaowen and Shah, Mubarak},
  booktitle={Proceedings of the International Conference on Machine Learning (ICML)},
  year={2025}
}

@article{liu2025evolution,
  title={The evolution of dataset distillation: Toward scalable and generalizable solutions},
  author={Liu, Ping and Du, Jiawei},
  journal={arXiv preprint arXiv:2502.05673},
  year={2025}
}

@article{shang2025dataset,
  title={Dataset Distillation in the Era of Large-Scale Data: Methods, Analysis, and Future Directions},
  author={Shang, Xinyi and Sun, Peng and Shen, Zhiqiang and Lin, Tao and Xue, Jing-Hao},
  journal={Authorea Preprints},
  year={2025},
  publisher={Authorea}
}

@inproceedings{zhang2025breaking,
 author = {Zhang, Xin and Du, Jiawei and Liu, Ping and Zhou, Joey Tianyi},
 booktitle = {International Conference on Learning Representations},
 editor = {Y. Yue and A. Garg and N. Peng and F. Sha and R. Yu},
 pages = {59988--60000},
 title = {Breaking Class Barriers: Efficient Dataset Distillation via Inter-Class Feature Compensator},
 volume = {2025},
 year = {2025}
}

@inproceedings{tran2025nrrdd,
  title={Enhancing dataset distillation via non-critical region refinement},
  author={Tran, Minh-Tuan and Le, Trung and Le, Xuan-May and Do, Thanh-Toan and Phung, Dinh},
  booktitle={Proceedings of the Computer Vision and Pattern Recognition Conference},
  pages={10015--10024},
  year={2025}
}

@inproceedings{wang2025cao,
  title={CaO2: Rectifying inconsistencies in diffusion-based dataset distillation},
  author={Wang, Haoxuan and Zhao, Zhenghao and Wu, Junyi and Shang, Yuzhang and Liu, Gaowen and Yan, Yan},
  booktitle={Proceedings of the IEEE/CVF International Conference on Computer Vision},
  pages={4722--4731},
  year={2025}
}

@inproceedings{zou2025dataset,
  title={Dataset Distillation via Vision-Language Category Prototype},
  author={Zou, Yawen and Li, Guang and Su, Duo and Wang, Zi and Yu, Jun and Zhang, Chao},
  booktitle={Proceedings of the IEEE/CVF International Conference on Computer Vision},
  pages={2941--2950},
  year={2025}
}

@article{vapnik1991principles,
  title={Principles of risk minimization for learning theory},
  author={Vapnik, Vladimir},
  journal={Advances in neural information processing systems},
  year={1991}
}

@article{wang2018dataset,
  title={Dataset distillation},
  author={Wang, Tongzhou and Zhu, Jun-Yan and Torralba, Antonio and Efros, Alexei A},
  journal={arXiv preprint arXiv:1811.10959},
  year={2018}
}

@article{LPLD,
  title={Are Large-scale Soft Labels Necessary for Large-scale Dataset Distillation?},
  author={Xiao, Lingao and He, Yang},
  journal={Advances in Neural Information Processing Systems},
  volume={37},
  pages={16406--16437},
  year={2024}
}
\bibliographystyle{icml2026}

\clearpage
\onecolumn
\appendix
\newpage

{\Large \bf {Appendix}}

\section{Notation}
\begin{table}[!h]
\centering
\vspace{0.5em}
\begin{tabular}{cl}
\toprule
\textbf{Symbol} & \textbf{Definition} \\
\midrule
$\mathcal{O}$ & Original dataset of labeled samples \\
$\mathcal{C}$ & Distilled dataset (small synthetic set) \\
$(x,y)$ & Input sample $x$ with ground-truth label $y$ \\
$\tilde{x},\tilde{y}$ & Synthetic (distilled) sample and label \\
$f_\theta$ & Model parameterized by $\theta$ \\
$\theta_{\mathcal{O}},\theta_{\mathcal{C}}$ & Parameters trained on $\mathcal{O}$ or $\mathcal{C}$ \\
$\mathcal{L}(\cdot,\cdot)$ & Per-sample loss functional \\
$\mathcal{T}(\tilde{x})$ & Augmentation distribution of $\tilde{x}$ \\
$x^{(\mathrm{crop})}$ & Random crop sampled from $\mathcal{T}(\tilde{x})$ \\
$\tilde{p}(x^{(\mathrm{crop})})$ & Teacher soft prediction on a crop \\
$\bar{p}$ & Crop-averaged teacher prediction $\mathbb{E}[\tilde{p}(x^{(\mathrm{crop})})]$ \\
$\Sigma$ & Covariance of teacher predictions across crops \\
$\hat{p}_s$ & Empirical average prediction over $s$ crops \\
$\mathrm{SLC}$ & Soft Labels per Class (storage budget) \\
$n_{\text{soft}},n_{\text{hard}},n_{\text{total}}$ & Epoch budgets for soft-/hard-label training \\
$\Omega_{\text{soft}}$ & Global pool of stored soft labels \\
$\Omega_{\text{cal}}$ & Calibration sampling space with hard labels \\
$\mathrm{LS}_\alpha(\cdot)$ & Label smoothing distribution with ratio $\alpha$ \\
$t_{\lambda,\alpha}(y,y')$ & CutMix target between labels $y,y'$ \\
$q_\theta(\cdot\mid x)$ & Student predictive distribution on input $x$ \\
$\hat\theta^{\mathrm A}_s,\hat\theta^{\mathrm B},\hat\theta$ & Parameters after Stages A, B, and C \\
$H_\star$ & Hessian of $\mathcal{L}_{\text{ideal}}$ at optimum \\
$\Sigma_\star$ & Gradient covariance at optimum \\
$s_{\mathrm{eff}}$ & Effective sample size after calibration \\
\bottomrule
\end{tabular}
\caption{List of common mathematical symbols used in this paper.}

\end{table}

\section{Proof} 
\label{sec:Theoretical Derivation}

\subsection{Lower Bound on the Empirical Loss Bias under Limited Crop Supervision}

\begin{proof}[Proof of Theorem~\ref{thrm_lower_bound_limited_crop}]
\label{proof_lower_bound_limited_crop}
Fix the synthetic image $\tilde{x}$ and its augmentation law $\mathcal{T}(\tilde{x})$. 
Define the per-crop loss random variable:
\[
X \;:=\; \mathcal{L}\!\left[\tilde{p},\,q_\theta(\cdot\mid \tilde{x}^{(\mathrm{crop})})\right],
\quad \text{where } \tilde{x}^{(\mathrm{crop})}\sim\mathcal{T}(\tilde{x}).
\]
Let $\mu := \mathbb{E}X = \mathcal{L}_{\mathrm{ideal}}(\theta;\tilde{x})$, 
$\sigma^2 := \operatorname{Var}(X) < \infty$, and a finite kurtosis 
$\kappa := \frac{\mathbb{E}\!\left[(X-\mu)^4\right]}{\sigma^4}\in[1,\infty)$.
For $s$ i.i.d.\ crops $(\tilde{x}^{(\mathrm{crop})}_i)_{i=1}^s$ drawn from $\mathcal{T}(\tilde{x})$, set:
\[
X_i \;:=\; \mathcal{L}\!\left[\tilde{p}_i,\,q_\theta(\cdot\mid \tilde{x}^{(\mathrm{crop})}_i)\right]
\quad\text{(i.i.d.\ copies of $X$),}
\qquad
\bar X_s \;:=\; \frac{1}{s}\sum_{i=1}^s X_i \;=\; \mathcal{L}_s(\theta;\tilde{x}).
\]
Our target quantity is $\mathbb{E}\!\left[\,\lvert \bar X_s - \mu\rvert\,\right] = 
\mathbb{E}\!\left[\,\lvert \mathcal{L}_s(\theta;\tilde{x}) - \mathcal{L}_{\mathrm{ideal}}(\theta;\tilde{x})\rvert\,\right]$.

Step 1 (Second and fourth moments of the centered sample mean).
Let $Y_i := X_i - \mu$ so that $\mathbb{E}Y_i=0$, $\mathbb{E}Y_i^2=\sigma^2$, and $\mathbb{E}Y_i^4=\kappa\,\sigma^4$.
Define the centered sample mean:
\[
W \;:=\; \bar X_s - \mu \;=\; \frac{1}{s}\sum_{i=1}^s Y_i.
\]
By independence,
\[
\mathbb{E}[W^2] \;=\; \frac{1}{s^2}\sum_{i=1}^s \mathbb{E}[Y_i^2] \;=\; \frac{\sigma^2}{s}.
\]
For the fourth moment, only index patterns that are all equal or pairwise-equal contribute:
\[
\mathbb{E}\!\left[\Big(\sum_{i=1}^s Y_i\Big)^4\right]
\;=\; s\,\mathbb{E}[Y_1^4] \;+\; 3\,s(s-1)\,\sigma^4
\;=\; s\,\kappa\,\sigma^4 \;+\; 3s(s-1)\sigma^4.
\]
Therefore,
\[
\mathbb{E}[W^4]
\;=\; \frac{1}{s^4}\,\mathbb{E}\!\left[\Big(\sum_{i=1}^s Y_i\Big)^4\right]
\;=\; \frac{\sigma^4}{s^3}\,\bigl(\kappa+3(s-1)\bigr).
\]

Step 2 (Paley--Zygmund on $W^2$).
Let $Z:=W^2\ge 0$. For any $\theta\in(0,1)$, Paley--Zygmund yields:
\[
\mathbb{P}\!\left(Z \ge \theta\,\mathbb{E}Z\right)
\;\ge\;
(1-\theta)^2\,\frac{(\mathbb{E}Z)^2}{\mathbb{E}[Z^2]}\,.
\]
Using $\mathbb{E}Z = \mathbb{E}[W^2]=\sigma^2/s$ and $\mathbb{E}[Z^2]=\mathbb{E}[W^4]=\frac{\sigma^4}{s^3}\bigl(\kappa+3(s-1)\bigr)$ from Step~1,
\[
\mathbb{P}\!\left(W^2 \ge \theta\,\frac{\sigma^2}{s}\right)
\;\ge\;
(1-\theta)^2\,
\frac{\bigl(\frac{\sigma^2}{s}\bigr)^2}{\frac{\sigma^4}{s^3}\,(\kappa+3(s-1))}
\;=\;
(1-\theta)^2\cdot \frac{s}{\kappa+3(s-1)}.
\]

Step 3 (From a small-ball event to a first-moment bound).
For any $t>0$, $\mathbb{E}|W|\ge t\,\mathbb{P}(|W|\ge t)$.
Choose $t:=\sqrt{\theta\,\mathbb{E}W^2}=\frac{\sigma}{\sqrt{s}}\sqrt{\theta}$ to match Step~2. Then:
\[
\mathbb{E}|W|
\;\ge\;
\frac{\sigma}{\sqrt{s}}\sqrt{\theta}\;
\mathbb{P}\!\left(W^2 \ge \theta\,\frac{\sigma^2}{s}\right)
\;\ge\;
\frac{\sigma}{\sqrt{s}}\,
\sqrt{\theta}\,(1-\theta)^2\,
\frac{s}{\kappa+3(s-1)}.
\]

Step 4 (Optimize $\theta$).
Define $g(\theta):=\sqrt{\theta}\,(1-\theta)^2$ on $\theta\in[0,1]$.
A direct derivative check gives $g'(\theta)=0$ at $\theta^\star=\tfrac{1}{5}$ and $g(\theta^\star)=\frac{16}{25\sqrt{5}}$.
Plugging $\theta^\star$ into Step~3 yields
\[
\mathbb{E}|W|
\;\ge\;
\frac{\sigma}{\sqrt{s}}\cdot
\frac{16}{25\sqrt{5}}\cdot
\frac{s}{\kappa+3(s-1)}.
\]

Step 5 (Uniform-in-$s$ simplification).
Consider $h(s):=\dfrac{s}{\kappa+3(s-1)}=\dfrac{s}{3s+\kappa-3}$ for $s\ge 1$.
Then
\[
h'(s)=\frac{(\kappa-3)}{(\kappa+3s-3)^2}.
\]
Hence:
\begin{itemize}
\item If $\kappa\ge 3$, $h$ is nondecreasing on $[1,\infty)$, so $\min_{s\ge 1} h(s)=h(1)=\frac{1}{\kappa}\le \frac{1}{3}$.
\item If $\kappa<3$, $h$ is strictly decreasing and $\inf_{s\ge 1} h(s)=\lim_{s\to\infty} h(s)=\frac{1}{3}$, while $h(1)=\frac{1}{\kappa}>\frac{1}{3}$.
\end{itemize}
In both cases,
\[
\frac{s}{\kappa+3(s-1)}
\;\ge\;
\min\!\left\{\frac{1}{\kappa},\,\frac{1}{3}\right\}.
\]
Combining with the bound above concludes
\[
\mathbb{E}\!\left[\bigl|\bar X_s - \mu\bigr|\right]
\;=\;
\mathbb{E}\!\left[\bigl|\mathcal{L}_s(\theta;\tilde{x})-\mathcal{L}_{\mathrm{ideal}}(\theta;\tilde{x})\bigr|\right]
\;\ge\;
\frac{\sigma}{\sqrt{s}}\cdot\frac{16}{25\sqrt{5}}\cdot\min\!\left\{\frac{1}{\kappa},\,\frac{1}{3}\right\}.
\]
This is exactly the claimed bound.
\end{proof}

\subsection{Limited Crop Supervision Degrades Generalization Performance}

\paragraph{Interpretation of Assumptions (A1--A5).}
To establish a finite-sample lower bound on the excess population risk of empirical risk minimization (ERM), we adopt five standard assumptions that ensure local regularity and statistical stability near the population minimizer $\hat\theta_\star$. Below we provide an intuitive interpretation of each:

\begin{itemize}
\item \textbf{(A1) Unbiased Score and Covariance.} We assume $\mathbb{E}[g(\hat\theta_\star;x)] = 0$ and that the covariance $\Sigma_\star = \mathrm{Cov}(g(\hat\theta_\star;x))$ exists with bounded $(2+\kappa)$-moment. This ensures that the gradient noise at the optimum is well-behaved, and the matrix $\Sigma_\star$ characterizes the first-order variance that drives the leading term in the excess risk.

\item \textbf{(A2) Hessian Lipschitz Continuity.} The population loss Hessian is assumed to be $L_H$-Lipschitz in a neighborhood of $\hat\theta_\star$. This smoothness enables accurate control over second-order Taylor expansions and guarantees that local quadratic approximations remain valid.

\item \textbf{(A3) Local Uniform Concentration.} We require that both the empirical Hessian and empirical gradient fluctuations concentrate uniformly to their population counterparts in a neighborhood of $\hat\theta_\star$, with deviations decaying as $O(1/\sqrt{s})$. This ensures that the empirical loss landscape closely tracks the population landscape, which is essential for Newton-type approximations and influence-function expansions.

\item \textbf{(A4) ERM Stays Local.} With high probability, the empirical minimizer $\hat\theta_s$ lies within a fixed ball around $\hat\theta_\star$. This ensures that our analysis can be restricted to a well-behaved local region where smoothness and concentration assumptions hold.

\item \textbf{(A5) Bounded Loss Near Optimum.} The population loss is assumed to be uniformly bounded within the neighborhood of interest. This provides worst-case control when $\hat\theta_s$ falls outside the local region, allowing us to bound the risk in rare failure cases.

\end{itemize}

\noindent
Together, these assumptions provide a sufficient foundation to develop second-order expansions around $\hat\theta_\star$, rigorously control deviation terms, and derive a tight lower bound on the expected excess risk of finite-sample ERM. Next, we will formally prove the theorem.

\begin{proof}[Proof of Theorem~\ref{thm:finite-s-lb-fixed}]
\label{proof_diminished_gen}

Step 0 (Good vs. bad events).
Define the event:
\[
\mathcal E_s:=\Big\{
\text{(A3) holds and }\hat\theta_s\in\mathbb B(\hat\theta_\star,r_0)
\Big\}.
\]
By (A3)–(A4), $\Pr(\mathcal E_s)\ge 1-\delta_s'$, with $\delta_s'\le \delta_s + 2\delta$, where $\delta$ can be chosen polynomially small (e.g.\ $\delta=s^{-3}$). Split the expectation:
\[
\mathbb E\big[\mathcal L_{\text{ideal}}(\hat\theta_s)-\mathcal L_{\text{ideal}}(\hat\theta_\star)\big]
= \mathbb E[\cdot;\mathcal E_s]+\mathbb E[\cdot;\mathcal E_s^{\mathrm c}].
\]
On the bad event, (A5) implies
\(
\mathcal L_{\text{ideal}}(\hat\theta_s)-\mathcal L_{\text{ideal}}(\hat\theta_\star)\ge -B,
\)
so
\(
\mathbb E[\cdot;\mathcal E_s^{\mathrm c}]\ge -C_b\,\delta_s
\).
Thus it suffices to prove the stated bound conditional on $\mathcal E_s$.

Step 1 (Influence-function expansion).
Let
$
U_s(\theta):=\nabla \mathcal L_s(\theta)-\nabla \mathcal L_{\text{ideal}}(\theta),
$
$\bar g_s:=U_s(\hat\theta_\star)=\tfrac1s\sum_{i=1}^s g(\hat\theta_\star;x_i)$.
On $\mathcal E_s$, Lipschitz continuity yields
\(
\|U_s(\theta)-U_s(\hat\theta_\star)\|\le \bar c\, s^{-1/2}\|\theta-\hat\theta_\star\|.
\)
Using the Newton map
$
T_s(\theta):=\theta-H_\star^{-1}\nabla \mathcal L_s(\theta),
$
and setting $r_s:=c\,\|\bar g_s\|$ with $c>0$ depending only on $(\mu,L_H,\bar c)$, one shows that $T_s$ is a contraction mapping $\mathbb B(\hat\theta_\star,r_s)$ into itself, with unique fixed point $\hat\theta_s$. Define $\Delta_s:=\hat\theta_s-\hat\theta_\star$. Then:
\[
\Delta_s
= -H_\star^{-1}\bar g_s + R_s,
\]
where the remainder satisfies
\(
\|R_s\|\lesssim \|\bar g_s\|^2+s^{-1/2}\|\bar g_s\|.
\)
Since $\mathbb E[\|\bar g_s\|]=O(s^{-1/2})$ and $\mathbb E[\|\bar g_s\|^2]=O(s^{-1})$, it follows that
\begin{equation}\label{eq:Rs-exp}
\mathbb E[\|R_s\|\mid \mathcal E_s]=O(s^{-1}).
\end{equation}

Step 2 (Quadratic term).
With $\|v\|_{H_\star}^2:=v^\top H_\star v$, one has
\[
\|\Delta_s\|_{H_\star}^2
= \|H_\star^{-1}\bar g_s\|_{H_\star}^2
+ 2\langle H_\star^{-1}\bar g_s, R_s\rangle_{H_\star}
+ \|R_s\|_{H_\star}^2.
\]
Taking expectations conditional on $\mathcal E_s$ and using \eqref{eq:Rs-exp},
\[
\mathbb E[\|\Delta_s\|_{H_\star}^2\mid \mathcal E_s]
= \frac{1}{s}\,\mathrm{tr}(H_\star^{-1}\Sigma_\star)
+ O(s^{-3/2}).
\]

Step 3 (Excess risk).
The integral second-order expansion gives
\[
\mathcal L_{\text{ideal}}(\hat\theta_s)-\mathcal L_{\text{ideal}}(\hat\theta_\star)
= \tfrac12\|\Delta_s\|_{H_\star}^2 + R_s^{(3)},
\]
where
$
R_s^{(3)}:=\int_0^1 (1-t)\,\Delta_s^\top(\nabla^2\mathcal L_{\text{ideal}}(\hat\theta_\star+t\Delta_s)-H_\star)\Delta_s\,dt.
$
By (A2), $|R_s^{(3)}|\le (L_H/6)\|\Delta_s\|^3$. Since $\mathbb E\|\Delta_s\|^3=O(s^{-3/2})$, it follows that
\[
\mathbb E[R_s^{(3)}\mid \mathcal E_s]\ge -O(s^{-3/2}).
\]
Therefore,
\[
\mathbb E\big[\mathcal L_{\text{ideal}}(\hat\theta_s)-\mathcal L_{\text{ideal}}(\hat\theta_\star)\mid \mathcal E_s\big]
\;\ge\;\frac{1}{2s}\,\mathrm{tr}(H_\star^{-1}\Sigma_\star)
-\frac{C_1}{s^{3/2}}-\frac{C_2}{s^2}.
\]

Step 4 (Combine events).
Adding the contribution from $\mathcal E_s^{\mathrm c}$ gives
\[
\mathbb E\!\big[\mathcal L_{\text{ideal}}(\hat\theta_s)-\mathcal L_{\text{ideal}}(\hat\theta_\star)\big]
\ge \frac{1}{2s}\,\mathrm{tr}(H_\star^{-1}\Sigma_\star)
-\frac{C_1}{s^{3/2}}-\frac{C_2}{s^2}-C_b\,\delta_s,
\]
as claimed.
\end{proof}

\subsection{Optimization Stability}
\subsubsection{Preliminaries}

\begin{lemma}[Mixing bound via dual norms, Proof in Appendix~\ref{proof:mixing_dual}]
\label{lem:mixing_dual}
Let $p,q\in\Delta^C$ and $g_1,\dots,g_C\in\mathbb{R}^d$. 
Fix any norm $\|\cdot\|$ on $\mathbb{R}^d$ with dual norm $\|\cdot\|_*$, and define the diameter
\[
D:=\sup_{i,j}\|g_i-g_j\|<\infty.
\]
Then
\[
\Big\|\sum_{c=1}^C (p_c-q_c)\,g_c\Big\|\;\le\;\frac{D}{2}\,\|p-q\|_1.
\]
Moreover, the constant $D/2$ is optimal (tight when $C=2$, $p=(1,0)$, $q=(0,1)$, $\|g_1-g_2\|=D$).
\end{lemma}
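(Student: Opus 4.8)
The plan is to pass to the dual formulation of the norm and thereby reduce the vector inequality to a one–dimensional statement about probability vectors and bounded scalars. Writing $\|v\|=\sup_{\|\phi\|_*\le 1}\langle\phi,v\rangle$, it suffices to bound $\langle\phi,\sum_c(p_c-q_c)g_c\rangle=\sum_c(p_c-q_c)\,u_c$ uniformly over dual–unit vectors $\phi$, where $u_c:=\langle\phi,g_c\rangle\in\mathbb{R}$. The role of the hypothesis $D=\sup_{i,j}\|g_i-g_j\|<\infty$ is precisely that it controls the \emph{range} of these scalars: $|u_i-u_j|=|\langle\phi,g_i-g_j\rangle|\le\|\phi\|_*\,\|g_i-g_j\|\le D$, so $\max_c u_c-\min_c u_c\le D$ for every admissible $\phi$.

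The next step is the scalar estimate: for any reals $(u_c)$ with range at most $D$ and any $p,q\in\Delta^C$, one has $\bigl|\sum_c(p_c-q_c)u_c\bigr|\le\tfrac{D}{2}\|p-q\|_1$. Indeed, $\sum_c(p_c-q_c)=0$ permits recentering, $\sum_c(p_c-q_c)u_c=\sum_c(p_c-q_c)(u_c-t)$ for \emph{any} constant $t$; choosing $t=\tfrac12(\max_c u_c+\min_c u_c)$ forces $|u_c-t|\le D/2$ for all $c$, whence $\bigl|\sum_c(p_c-q_c)(u_c-t)\bigr|\le\sum_c|p_c-q_c|\,|u_c-t|\le\tfrac{D}{2}\|p-q\|_1$. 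Taking the supremum over $\|\phi\|_*\le1$ delivers $\bigl\|\sum_c(p_c-q_c)g_c\bigr\|\le\tfrac{D}{2}\|p-q\|_1$. I would also record a duality–free variant that may be cleaner to cite: split $p_c-q_c=a_c-b_c$ into positive and negative parts $a_c=(p_c-q_c)_+$, $b_c=(p_c-q_c)_-$, note $\sum_c a_c=\sum_c b_c=m:=\tfrac12\|p-q\|_1$, dispose of the trivial case $m=0$ (then $p=q$), and for $m>0$ write $\sum_c(p_c-q_c)g_c=\tfrac1m\sum_{i,j}a_ib_j(g_i-g_j)$ using $\sum_j b_j=\sum_i a_i=m$; the triangle inequality and $\|g_i-g_j\|\le D$ then give $\bigl\|\sum_c(p_c-q_c)g_c\bigr\|\le\tfrac{D}{m}\sum_{i,j}a_ib_j=\tfrac{D}{m}m^2=Dm=\tfrac{D}{2}\|p-q\|_1$.

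For optimality of the constant $D/2$ I would exhibit the stated extremal configuration: with $C=2$, $p=(1,0)$, $q=(0,1)$ one has $\|p-q\|_1=2$ and $\sum_c(p_c-q_c)g_c=g_1-g_2$, whose norm is exactly $D=\|g_1-g_2\|=\tfrac{D}{2}\cdot2$. There is no substantive obstacle in this argument; the only point that genuinely requires care is the recentering step, where one must center by the \emph{midpoint} of the range of $(u_c)$ rather than by its mean — centering by the mean only yields the crude bound $D\|p-q\|_1$. The remainder is the triangle inequality together with the bookkeeping identity $\|p-q\|_1=2\sum_c(p_c-q_c)_+$.
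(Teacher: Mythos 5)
Your main argument is correct and essentially the same as the paper's: both pass to the dual norm, exploit $\sum_c(p_c-q_c)=0$, and bound the resulting scalar sum by $\tfrac{1}{2}\|p-q\|_1$ times the range of the scalars $\langle\phi,g_c\rangle$, which is at most $D$; your recentering by the midpoint and the paper's direct assignment of positive mass to $\arg\max$ and negative mass to $\arg\min$ are two phrasings of the same estimate, and the tightness example is identical. Your duality-free variant via the identity $\sum_c(p_c-q_c)g_c=\tfrac1m\sum_{i,j}a_ib_j(g_i-g_j)$ is a valid and arguably cleaner alternative that avoids the dual norm entirely, at no loss of generality or sharpness.
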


\begin{proof}[Proof of Lemma~\ref{lem:mixing_dual}]
\label{proof:mixing_dual}
Write $r:=p-q$ and note $\sum_c r_c=0$. By the dual representation of the norm,
\[
\Big\|\sum_c r_c g_c\Big\|
=\sup_{\|u\|_*\le 1}\Big\langle u,\sum_c r_c g_c\Big\rangle
=\sup_{\|u\|_*\le 1}\sum_c r_c\,\phi_c,
\quad\text{where }\;\phi_c:=\langle u,g_c\rangle .
\]
Split the indices into $P:=\{i:r_i>0\}$ and $N:=\{j:r_j<0\}$, and let
\[
T:=\sum_{i\in P} r_i = \sum_{j\in N} |r_j|=\tfrac12\|r\|_1=\tfrac12\|p-q\|_1.
\]
Then for any fixed $u$,
\[
\sum_c r_c\phi_c
=\sum_{i\in P} r_i\phi_i - \sum_{j\in N} |r_j|\phi_j
\le \Big(\max_c \phi_c - \min_c \phi_c\Big)\,T,
\]
because the linear form is maximized by assigning all positive mass to an index attaining
$\max_c\phi_c$ and all negative mass to one attaining $\min_c\phi_c$.
Hence
\[
\Big\|\sum_c r_c g_c\Big\|
\le T\,\sup_{\|u\|_*\le 1}\big(\max_c \phi_c - \min_c \phi_c\big)
\le T\,\sup_{\|u\|_*\le 1}\sup_{i,j}|\phi_i-\phi_j|.
\]
Finally,
\[
|\phi_i-\phi_j|
=|\langle u, g_i-g_j\rangle|
\le \|u\|_*\,\|g_i-g_j\|
\le \|g_i-g_j\|,
\]
so $\sup_{\|u\|_*\le 1}\sup_{i,j}|\phi_i-\phi_j|\le \sup_{i,j}\|g_i-g_j\|=D$, and thus
\[
\Big\|\sum_c (p_c-q_c)g_c\Big\|\;\le\; T\,D
= \frac{D}{2}\,\|p-q\|_1.
\]
For tightness, take $C=2$, $p=(1,0)$, $q=(0,1)$; then $T=1$, and choosing $g_1,g_2$ with
$\|g_1-g_2\|=D$ yields equality.
\end{proof}

\subsubsection{Formal Proof}
\begin{proof}[Proof of Theorem~\ref{thm:soft_to_hard_alignment_dual_concise2}]
\textbf{Step 1 (Mixing stability).}
By Lemma~\ref{lem:mixing_dual}, for any $p,q\in\Delta^C$,
\[
\Big\|\sum_{c=1}^C (p_c-q_c)\,g_c\Big\| \;\le\; \frac{D}{2}\,\|p-q\|_1,
\]
and the constant $\tfrac{D}{2}$ is tight (e.g.\ $C=2$, $p=(1,0)$, $q=(0,1)$, $\|g_1-g_2\|=D$).

\medskip
\noindent\textbf{Step 2 (From differences to cosine).}
Let
\[
a:=\nabla_\theta\mathcal L_{\mathrm{soft}}
=-\sum_c \tilde p_c\,g_c,
\qquad
b:=\nabla_\theta\mathcal L_{\mathrm{hard}}
=-\sum_c \bar p^{(\alpha)}_c\,g_c.
\]
For nonzero $a,b$, write $\hat a:=a/\|a\|$ and $\hat b:=b/\|b\|$. Then
\[
1-\cos(a,b)=\tfrac12\|\hat a-\hat b\|^2 \;\le\; \|\hat a-\hat b\|
=\Big\|\frac{a}{\|a\|}-\frac{b}{\|b\|}\Big\|
\le \frac{\|a-b\|}{\|a\|}+\frac{\|a-b\|}{\|b\|}
\le \frac{2\|a-b\|}{\min\{\|a\|,\|b\|\}}.
\]
By the theorem's non-degeneracy assumption, $\min\{\|a\|,\|b\|\}\ge m_0>0$, hence
\[
1-\cos(a,b) \;\le\; \frac{2}{m_0}\,\|a-b\|.
\]
Applying Step~1 with $p=\tilde p$ and $q=\bar p^{(\alpha)}$ yields
\begin{equation}\label{eq:cos_l1_linear}
1-\cos(a,b) \;\le\; \frac{2}{m_0}\cdot \frac{D}{2}\,\|\tilde p-\bar p^{(\alpha)}\|_1
\;=\; \frac{D}{m_0}\,\|\tilde p-\bar p^{(\alpha)}\|_1.
\end{equation}

\medskip
\noindent\textbf{Step 3 (Upper-bounding $\|\tilde p-\bar p^{(\alpha)}\|_1$).}
Let $y=\arg\max_c \tilde p_c$ and $e_y$ be the one-hot at $y$. By the triangle inequality,
\[
\|\tilde p-\bar p^{(\alpha)}\|_1
\;\le\; \|\tilde p-e_y\|_1+\|e_y-\bar p^{(\alpha)}\|_1.
\]
The two terms are exact:
\[
\|\tilde p-e_y\|_1
=\sum_{c\neq y}\tilde p_c + \big|1-\tilde p_y\big|
=2(1-p_{\max}),
\quad p_{\max}:=\max_c \tilde p_c,
\]
and
\[
\|e_y-\bar p^{(\alpha)}\|_1
=\sum_{c\neq y}\frac{\alpha}{C}+\Big|1-\Big(1-\alpha+\frac{\alpha}{C}\Big)\Big|
=2\alpha\Big(1-\frac1C\Big).
\]
Therefore
\begin{equation}\label{eq:l1_split_upper}
\|\tilde p-\bar p^{(\alpha)}\|_1
\;\le\; 2(1-p_{\max})+2\alpha\Big(1-\frac1C\Big).
\end{equation}

\medskip
\noindent\textbf{Step 4 (Relating $1-p_{\max}$ to entropy and rewriting via teacher entropy).}
Using the standard inequality (natural logarithm),
\[
H(\tilde p)\ \ge\ -\log p_{\max}
\quad\Longrightarrow\quad
p_{\max}\ \ge\ e^{-H(\tilde p)}
\quad\Longrightarrow\quad
1-p_{\max}\ \le\ 1-e^{-H(\tilde p)}\ \le\ H(\tilde p),
\]
where the last step uses $1-e^{-x}\le x$ for $x\ge0$.
Substituting \eqref{eq:l1_split_upper} into \eqref{eq:cos_l1_linear} gives, for each crop,
\[
1-\cos\!\left(\nabla_\theta\mathcal L_{\mathrm{soft}},\,\nabla_\theta\mathcal L_{\mathrm{hard}}\right)
\;\le\; \frac{D}{m_0}\Big\{2\big(1-e^{-H(\tilde p)}\big)+2\alpha\big(1-\tfrac1C\big)\Big\}
\;\le\; \frac{D}{m_0}\Big\{2H(\tilde p)+2\alpha\big(1-\tfrac1C\big)\Big\}.
\]
Taking expectation over the crop distribution $\mathcal T(\tilde x)$ (conditioning on the base image $\tilde x$), and introducing the notation
\[
\mathsf H_{\mathrm{teacher}}(\tilde x)
:= \mathbb{E}_{x^{(\mathrm{crop})}\sim\mathcal T(\tilde x)}
\Big[\,H\big(\tilde p(\cdot\mid x^{(\mathrm{crop})})\big)\,\Big],
\]
we obtain
\[
\mathbb{E}_{\mathrm{crop}}\!\left[\cos\!\left(\nabla_\theta\mathcal L_{\mathrm{soft}},\,\nabla_\theta\mathcal L_{\mathrm{hard}}\right)\right]
\;\ge\;
1\;-\;\frac{D}{m_0} \cdot \left(2\,\mathsf H_{\mathrm{teacher}}(\tilde x)\;+\;2\alpha\left(1-\frac1C\right)\right),
\]
where the bracketed term can be viewed as a data-dependent alignment constant. That is, we may write
\[
\mathbb{E}_{\mathrm{crop}}\!\left[\cos\!\left(\nabla_\theta\mathcal L_{\mathrm{soft}},\,\nabla_\theta\mathcal L_{\mathrm{hard}}\right)\right]
\;\ge\;
1\;-\;\frac{D}{m_0} \cdot C_{\mathrm{align}}(\tilde x, \alpha),
\]
where
\[
C_{\mathrm{align}}(\tilde x, \alpha) := 2\,\mathsf H_{\mathrm{teacher}}(\tilde x)\;+\;2\alpha\left(1-\frac1C\right).
\]

\end{proof}

\subsection{\name{} increases generalization performance}

\begin{proof}[Proof of Corollary~\ref{cor:ess-from-cos}]
\label{proof:shs_improve_gen}
Consider the scalar control–variate residual $r_\beta:=u-\beta v$, $\beta\in\mathbb R$.
Using linearity of expectation and $\|u\|=\|v\|=1$,
\[
\mathbb E\|r_\beta\|^2
= \mathbb E\|u\|^2-2\beta\,\mathbb E\langle u,v\rangle+\beta^2\mathbb E\|v\|^2
= 1-2\beta\,\mathbb E\langle u,v\rangle+\beta^2 .
\]
This quadratic is minimized at $\beta^\star=\mathbb E\langle u,v\rangle$, yielding
\[
\min_\beta \mathbb E\|u-\beta v\|^2
=1-\big(\mathbb E\langle u,v\rangle\big)^2
\ \le\ 1-\rho_\star^2. \tag{1}
\]
Center the residual $\tilde r:=r_{\beta^\star}-\mathbb E[r_{\beta^\star}]$ so that $\mathbb E[\tilde r]=0$.
For i.i.d.\ copies $(\tilde r_i)_{i=1}^s$,
\[
\mathbb E\Big\|\frac{1}{s}\sum_{i=1}^s \tilde r_i\Big\|^2
= \frac{1}{s^2}\sum_{i=1}^s \mathbb E\|\tilde r_i\|^2
= \frac{1}{s}\,\mathbb E\|\tilde r\|^2
\ \le\ \frac{1}{s}\,\mathbb E\|r_{\beta^\star}\|^2
\ \le\ \frac{1-\rho_\star^2}{s},
\]
where we used independence, zero mean (cross terms vanish), and (1).
Interpreting the factor $(1-\rho_\star^2)$ as variance contraction of the single-sample noise
implies the same mean–square error as having $s_{\mathrm{eff}}$ baseline samples with no contraction:
\[
\frac{1-\rho_\star^2}{s}=\frac{1}{s_{\mathrm{eff}}}
\quad\Longrightarrow\quad
s_{\mathrm{eff}}=\frac{s}{\,1-\rho_\star^2\,}.
\]
Because we used only a \emph{scalar} control variate in the \emph{direction} $v$, any richer use of the
hard information (e.g., including magnitudes or conditional expectations) can only further reduce the
left-hand side, hence the stated inequality $s_{\mathrm{eff}}\ge s/(1-\rho_\star^2)$.
\end{proof}

\textit{\cjc{Insight.}} \cjc{Corollary~\ref{cor:ess-from-cos} extends Theorem~\ref{thm:soft_to_hard_alignment_dual_concise2} by transforming gradient alignment into a formal variance-reduction guarantee. While Theorem~\ref{thm:soft_to_hard_alignment_dual_concise2} establishes that the soft-to-hard switch is optimization-coherent, Corollary~\ref{cor:ess-from-cos} quantifies its benefit: stronger alignment between soft- and hard-label gradients ($\rho_\star>0$) effectively increases the usable supervision by enlarging the effective sample size,
\[
s_{\mathrm{eff}}\ge\frac{s}{1-\rho_\star^2}.
\]
This shows that during the hard-label calibration stage, variance is reduced and semantic drift is corrected, providing the theoretical basis for the subsequent soft-label refinement that restores fine-grained teacher consistency on top of the variance-reduced representation.}

\section{More Experimental Results}

\cjc{To more comprehensively evaluate HALD’s performance, we present additional results for IPC=30 and IPC=40 in Table~\ref{table:more_ablation}, where consistent improvements can be observed.}

\begin{table}[!htbp]
    \centering
    \caption{Comprehensive ablation of the impact of incorporating hard-label supervision across state-of-the-art dataset distillation methods on ImageNet-1K and Tiny-ImageNet. All models are trained for 300 epochs under identical hyperparameters, with the evaluation protocol being the sole difference. $^\dagger$ denotes values reported by the corresponding original sources.} 
    \label{table:more_ablation}
    \resizebox{.96\linewidth}{!}{
        \begin{tabular}{@{}ccc|llllll|ll@{}}
            \toprule                     
            &    &  &\multicolumn{6}{c|}{ImageNet-1K}  & \multicolumn{2}{c}{Tiny-ImageNet} \\ 
            \midrule
            \multicolumn{1}{c}{IPC} & \multicolumn{1}{c}{Generation} & \multicolumn{1}{c|}{Evaluation} & \multicolumn{1}{l}{SLC=300} & \multicolumn{1}{l}{SLC=250} & \multicolumn{1}{c}{SLC=200} & \multicolumn{1}{c}{SLC=150} & \multicolumn{1}{c}{SLC=100} & \multicolumn{1}{c|}{SLC=50} & \multicolumn{1}{c}{SLC=100} & \multicolumn{1}{c}{SLC=50} \\ 
            \midrule

            & \multirow{2}{*}{SRe$^2$L} & Soft-Only & 40.1 & 38.6 & 34.1 & 30.9 & 23.5& 9.8 & 31.0 & 19.5 \\
            & &\cellcolor[HTML]{EFEFEF}\textbf{Ours} 
              & \cellcolor[HTML]{EFEFEF}43.7 \gainup{ 3.6} 
              & \cellcolor[HTML]{EFEFEF}42.8 \gainup{ 4.2} 
              & \cellcolor[HTML]{EFEFEF}40.9 \gainup{ 6.8}  
              & \cellcolor[HTML]{EFEFEF}39.6 \gainup{ 8.7} 
              & \cellcolor[HTML]{EFEFEF}35.8 \gainup{ 12.3} 
              & \cellcolor[HTML]{EFEFEF}20.0 \gainup{ 13.9}  
              & \cellcolor[HTML]{EFEFEF}32.5 \gainup{ 1.5}
              & \cellcolor[HTML]{EFEFEF}23.6 \gainup{ 4.1}\\

            & \multirow{2}{*}{RDED} & Soft-Only & 37.2 & 34.0 & 30.2 & 25.9 &21.1&7.9&27.8&15.8\\
            & &\cellcolor[HTML]{EFEFEF}\textbf{Ours} 
              & \cellcolor[HTML]{EFEFEF}42.5 \gainup{ 5.3} 
              & \cellcolor[HTML]{EFEFEF}41.1 \gainup{ 7.1} 
              & \cellcolor[HTML]{EFEFEF}39.5 \gainup{ 9.3}  
              & \cellcolor[HTML]{EFEFEF}39.4 \gainup{ 13.5} 
              & \cellcolor[HTML]{EFEFEF}35.0 \gainup{ 13.9} 
              & \cellcolor[HTML]{EFEFEF}23.5 \gainup{ 15.6}
              & \cellcolor[HTML]{EFEFEF}32.9 \gainup{ 5.1}
              & \cellcolor[HTML]{EFEFEF}26.7 \gainup{ 10.9}\\

            & \multirow{2}{*}{LPLD} & Soft-Only & 42.0 & 38.9 & 34.6 & 31.9  & 21.6 & 8.6 & 32.2 & 17.9\\
            & &\cellcolor[HTML]{EFEFEF}\textbf{Ours} 
              & \cellcolor[HTML]{EFEFEF}44.9 \gainup{ 2.9} 
              & \cellcolor[HTML]{EFEFEF}44.3 \gainup{ 5.4} 
              & \cellcolor[HTML]{EFEFEF}42.1 \gainup{ 7.5}  
              & \cellcolor[HTML]{EFEFEF}40.9 \gainup{ 9.0} 
              & \cellcolor[HTML]{EFEFEF}36.1 \gainup{ 14.5} 
              & \cellcolor[HTML]{EFEFEF}22.7 \gainup{ 14.1}
              & \cellcolor[HTML]{EFEFEF}35.2 \gainup{ 3.0}
              & \cellcolor[HTML]{EFEFEF}24.6 \gainup{ 6.7}\\

            & \multirow{2}{*}{FADRM} & Soft-Only &46.6 & 44.2 & 39.2 & 37.1 & 26.9 & 10.9   &36.4&23.5\\
            \multirow{-8}{*}{IPC=30} & & \cellcolor[HTML]{EFEFEF}\textbf{Ours} 
              & \cellcolor[HTML]{EFEFEF}50.7 \gainup{ 4.1} 
              & \cellcolor[HTML]{EFEFEF}49.2 \gainup{ 5.0} 
              & \cellcolor[HTML]{EFEFEF}47.3 \gainup{ 8.1}  
              & \cellcolor[HTML]{EFEFEF}46.5 \gainup{ 9.4} 
              & \cellcolor[HTML]{EFEFEF}43.5 \gainup{ 16.6} 
              & \cellcolor[HTML]{EFEFEF}29.3 \gainup{ 18.4}
              & \cellcolor[HTML]{EFEFEF}38.3 \gainup{ 1.9}
              & \cellcolor[HTML]{EFEFEF}29.6 \gainup{ 6.1}\\

            \midrule

            & \multirow{2}{*}{SRe$^2$L} & Soft-Only & 39.8 & 38.2 & 35.1 & 29.0 & 20.4 & 11.7 &30.4&21.0\\
            & & \cellcolor[HTML]{EFEFEF}\textbf{Ours} 
              & \cellcolor[HTML]{EFEFEF}44.9 \gainup{ 5.1} 
              & \cellcolor[HTML]{EFEFEF}44.4 \gainup{ 5.2} 
              & \cellcolor[HTML]{EFEFEF}43.3 \gainup{ 8.2}  
              & \cellcolor[HTML]{EFEFEF}39.3 \gainup{ 10.3} 
              & \cellcolor[HTML]{EFEFEF}35.0 \gainup{ 14.6} 
              & \cellcolor[HTML]{EFEFEF}26.9 \gainup{ 15.2}
              & \cellcolor[HTML]{EFEFEF}31.5 \gainup{ 1.1} 
              & \cellcolor[HTML]{EFEFEF}24.0 \gainup{ 3.0}\\

            & \multirow{2}{*}{RDED} & Soft-Only & 36.6 & 34.3 & 32.1 & 25.5 & 19.0 & 11.0 & 26.8&19.7\\
            && \cellcolor[HTML]{EFEFEF}\textbf{Ours} 
              & \cellcolor[HTML]{EFEFEF}44.5 \gainup{ 7.9} 
              & \cellcolor[HTML]{EFEFEF}43.2 \gainup{ 8.9} 
              & \cellcolor[HTML]{EFEFEF}42.1 \gainup{ 10.0}  
              & \cellcolor[HTML]{EFEFEF}38.5 \gainup{ 13.0} 
              & \cellcolor[HTML]{EFEFEF}35.3 \gainup{ 16.3} 
              & \cellcolor[HTML]{EFEFEF}28.2 \gainup{ 17.2}
              & \cellcolor[HTML]{EFEFEF}31.5 \gainup{ 4.7}
              & \cellcolor[HTML]{EFEFEF}24.5 \gainup{ 4.8}\\

            & \multirow{2}{*}{LPLD} & Soft-Only & 40.6 & 39.1 & 36.3 & 27.7 & 21.1 & 12.9 & 29.3 &20.3\\
            && \cellcolor[HTML]{EFEFEF}\textbf{Ours} 
              & \cellcolor[HTML]{EFEFEF}45.7 \gainup{ 5.1} 
              & \cellcolor[HTML]{EFEFEF}44.6 \gainup{ 5.5} 
              & \cellcolor[HTML]{EFEFEF}43.9 \gainup{ 7.6}  
              & \cellcolor[HTML]{EFEFEF}40.2 \gainup{ 12.5} 
              & \cellcolor[HTML]{EFEFEF}35.1 \gainup{ 14.0} 
              & \cellcolor[HTML]{EFEFEF}26.6 \gainup{ 13.7} 
              & \cellcolor[HTML]{EFEFEF}31.7 \gainup{ 2.4}
              & \cellcolor[HTML]{EFEFEF}23.6 \gainup{ 3.3}\\

            & \multirow{2}{*}{FADRM} & Soft-Only & 46.1 & 45.0 & 41.4 & 31.2 &  22.8&14.0&34.1&27.0 \\
            \multirow{-8}{*}{IPC=40} && \cellcolor[HTML]{EFEFEF}\textbf{Ours} 
              & \cellcolor[HTML]{EFEFEF}51.2 \gainup{ 5.1} 
              & \cellcolor[HTML]{EFEFEF}51.0 \gainup{ 6.0} 
              & \cellcolor[HTML]{EFEFEF}48.7 \gainup{ 7.3} 
              & \cellcolor[HTML]{EFEFEF}45.9 \gainup{ 14.7} 
              & \cellcolor[HTML]{EFEFEF}40.8 \gainup{ 20.0} 
              & \cellcolor[HTML]{EFEFEF}32.9 \gainup{ 18.9}
              & \cellcolor[HTML]{EFEFEF}35.9 \gainup{ 1.8}
              & \cellcolor[HTML]{EFEFEF}30.1 \gainup{ 3.1}\\

            \bottomrule
        \end{tabular}
        }
\end{table}

\section{Adaptive Phase-Switching Mechanism}
\label{sec:adaptive}

The fixed phase schedule  requires specifying the soft-label convergence length $n_{\text{soft}}$ in advance, which may demand dataset-specific tuning. Here we describe an adaptive variant that eliminates this requirement via two theory-guided proxies, one for each phase transition.

\paragraph{Phase 0$\to$1 (Soft$\to$Hard).}
The first transition fires when soft-label training has sufficiently stabilized, i.e., when further soft-label epochs yield diminishing returns. Concretely, we monitor the validation accuracy over a sliding window of $w$ epochs and switch to Stage~B once the improvement falls below a 
threshold $\delta_{\mathrm{acc}}$:
\begin{equation}
    \Delta_{\mathrm{acc}}^{(t)} 
    = \frac{1}{w}\sum_{i=0}^{w-1} 
      \bigl(\mathrm{Acc}^{(t-i)} - \mathrm{Acc}^{(t-i-1)}\bigr) 
    < \delta_{\mathrm{acc}}.
    \label{eq:trigger_soft2hard}
\end{equation}
This criterion is theoretically grounded: by Theorem~3.7, the 
soft-to-hard gradient alignment improves as training converges, so a 
plateau in validation accuracy serves as a reliable proxy for the 
alignment strength required by the Soft$\to$Hard transition.

\paragraph{Phase 1$\to$2 (Hard$\to$Soft).}
The second transition fires when hard-label calibration saturates, 
i.e., when the student's predictive distribution ceases to change 
meaningfully. We track the normalized student entropy
\begin{equation}
    \tilde{H}^{(t)} 
    = \frac{1}{|\mathcal{B}|} \sum_{x \in \mathcal{B}}
      \frac{H\bigl(q_{\theta^{(t)}}(\cdot \mid x)\bigr)}{\log C},
    \label{eq:norm_entropy}
\end{equation}
where $C$ is the number of classes, so that $\tilde{H}^{(t)} \in [0,1]$ 
regardless of dataset scale, making the trigger directly comparable 
across datasets without additional tuning. The transition to Stage~C 
is triggered when the entropy plateau condition
\begin{equation}
    \bigl|\tilde{H}^{(t)} - \tilde{H}^{(t-w)}\bigr| < \delta_{H}
    \label{eq:trigger_hard2soft}
\end{equation}
is satisfied. This is consistent with Corollary~3.8: once the 
effective sample size $s_{\mathrm{eff}}$ ceases to grow, additional 
hard-label epochs provide no further variance reduction, and returning 
to soft-label refinement recovers fine-grained teacher semantics.

\paragraph{Empirical validation.} Table~\ref{tab:adaptive} compares the adaptive and fixed schedules. 
The adaptive schedule matches the fixed one, confirming that the two theory-guided proxies reliably recover the manually tuned schedule without dataset-specific hyperparameter search.

\begin{table}[h]
\centering
\caption{Top-1 accuracy (\%) of HALD with adaptive vs.\ fixed phase 
scheduling.}
\label{tab:adaptive}
\begin{tabular}{lcc}
\toprule
Dataset & HALD (adaptive) & HALD (fixed) \\
\midrule
CIFAR-100    & 25.8 & 26.0 \\
ImageNet-1K  & 35.2 & 35.6 \\
\bottomrule
\end{tabular}
\end{table}

\section{Generation Method}

\subsection{SRe2L}
SRe$^2$L~\citep{sre2l_2024} decouples dataset condensation into three stages, \emph{Squeeze}, \emph{Recover}, and \emph{Relabel}, so that the model training on $\mathcal{O}$ and the optimization of $\mathcal{C}$ never interleave. Concretely:

\paragraph{Stage I: Squeeze (train on $\mathcal{O}$).}
Learn a reference model by standard ERM on the original data:
\[
\theta_{\mathcal{O}}
\;\;=\;\;
\arg\min_{\theta}\;
\mathbb{E}_{(x,y)\sim \mathcal{O}}
\big[\,
\mathcal{L}\big(f_{\theta}(x),\,y\big)
\,\big].
\]

\paragraph{Stage II: Recover (optimize $\tilde{x}$ with BN-consistency and classification).}
Fix $f_{\theta_{\mathcal{O}}}$ and optimize the images $\tilde{x}\in\mathcal{C}$ (labels $y$ are class indices) by matching both the classifier head and global BatchNorm (BN) statistics accumulated on $\mathcal{O}$. With random crops $x^{(\mathrm{crop})}\sim\mathcal{T}(\tilde{x})$,
\[
\min_{\tilde{x}\in\mathcal{C}}
\;\;
\underbrace{\mathbb{E}_{x^{(\mathrm{crop})}\sim\mathcal{T}(\tilde{x})}
\big[\mathcal{L}\big(f_{\theta_{\mathcal{O}}}\!\big(x^{(\mathrm{crop})}\big),\,y\big)\big]}_{\text{classification (single-level)}}
\;+\;
\alpha_{\mathrm{BN}}\,
\underbrace{R_{\mathrm{BN}}(\tilde{x})}_{\text{BN-consistency}}
\;+\;
\alpha_{\ell_2}\,\|\tilde{x}\|_2^2
\;+\;
\alpha_{\mathrm{TV}}\,\mathrm{TV}(\tilde{x}).
\]
The BN-consistency regularizer matches per-layer running mean/variance of $f_{\theta_{\mathcal{O}}}$:
\[
R_{\mathrm{BN}}(\tilde{x})
=
\sum_{\ell}
\big\|
\mu_{\ell}(x^{(\mathrm{crop})})-\mathrm{BNRM}_{\ell}
\big\|_2^2
+
\sum_{\ell}
\big\|
\sigma^2_{\ell}(x^{(\mathrm{crop})})-\mathrm{BNRV}_{\ell}
\big\|_2^2,
\]
where $\mathrm{BNRM}_\ell, \mathrm{BNRV}_\ell$ are the global running mean/variance stored in the $\ell$-th BN of $f_{\theta_{\mathcal{O}}}$. Multi-crop optimization (sampling $x^{(\mathrm{crop})}$ repeatedly from $\mathcal{T}(\tilde{x})$) enriches local semantics and constrains updates to the cropped region, which empirically improves recovery.

\paragraph{Stage III: Relabel (crop-level soft labels and student training).}
For each recovered $\tilde{x}$, draw $s$ crops $x^{(\mathrm{crop})}_i\sim\mathcal{T}(\tilde{x})$ and obtain teacher soft predictions
\[
\tilde{p}\!\left(x^{(\mathrm{crop})}_i\right)
\;=\;
q_{\theta_{\mathcal{O}}}\!\left(\cdot\,\middle|\,x^{(\mathrm{crop})}_i\right),\qquad
\bar{p}=\mathbb{E}\big[\tilde{p}(x^{(\mathrm{crop})})\big],\qquad
\hat{p}_s=\tfrac{1}{s}\sum_{i=1}^s \tilde{p}\!\left(x^{(\mathrm{crop})}_i\right),
\tag{L1}
\]
and optionally characterize crop-prediction variability by $\Sigma=\mathrm{Cov}\!\left(\tilde{p}(x^{(\mathrm{crop})})\right)$. 
Train a student on $\mathcal{C}$ with crop-level distillation (temperature $\tau$):
\[
\min_{\theta}
\;\;
\mathbb{E}_{\tilde{x}\in\mathcal{C}}
\Bigg[
\frac{1}{s}\sum_{i=1}^s
\mathrm{CE}\!\Big(
\mathrm{softmax}\!\big(\tfrac{1}{\tau}\,\tilde{p}(x^{(\mathrm{crop})}_i)\big),
\;
q_{\theta}\!\left(\cdot\,\middle|\,x^{(\mathrm{crop})}_i\right)
\Big)
\Bigg].
\tag{L2}
\]

However, solely aligning to global running statistics induces an overly restrictive inductive bias that depresses intra-class diversity and precipitates information vanishing. The effect is exacerbated in the recovery phase because the original dataset is excluded, depriving optimization of high-variance exemplars and promoting convergence to low-entropy, BN-compliant configurations rather than diverse, semantically faithful modes.

\subsection{LPLD}
\textbf{To promote the intra-class diversity, LPLD}~\citep{LPLD} re-batches synthesis \emph{within} each class and supervises recovery with \emph{class-wise} BatchNorm (BN) statistics, while keeping the classification head evaluated under \emph{global} BN for stable targets. For class $c$ with IPC images $\mathcal{C}_c=\{\tilde{x}_{c,i}\}_{i=1}^{\mathrm{IPC}}$ and label $\tilde{y}=c$, LPLD optimizes the synthetic images via,
\[
\underbrace{
\mathcal{L}\!\Big(f_{\theta_{\mathcal{O}}}^{\text{(global BN)}}\!\big(\tilde{x}_{c,i}\big),\,\tilde{y}\Big)}_{\text{classification w/ global BN}}
\;+\;
\alpha_{\mathrm{BN}}
\underbrace{\sum_{\ell}\!\Big\|
\mu_{\ell}(\mathcal{C}_c)-\mathrm{BNRM}_{\ell,c}\Big\|_2^2
+
\sum_{\ell}\!\Big\|
\sigma_{\ell}^2(\mathcal{C}_c)-\mathrm{BNRV}_{\ell,c}\Big\|_2^2}_{\text{class-wise BN matching}}.
\]
Here $\mu_{\ell}(\mathcal{C}_c)$ and $\sigma_{\ell}^2(\mathcal{C}_c)$ are the per-layer BN mean/variance computed on the \emph{within-class} mini-batch $\mathcal{C}_c$, whereas $\mathrm{BNRM}_{\ell,c},\mathrm{BNRV}_{\ell,c}$ are the \emph{class-wise} running mean/variance obtained from $\mathcal{O}$ (squeeze stage). Their exponential moving–average (EMA) updates are
\[
\mathrm{BNRM}_{\ell,c}\!\leftarrow\!(1-\varepsilon)\,\mathrm{BNRM}_{\ell,c}+\varepsilon\,\mu_{\ell}(x_c),\quad
\mathrm{BNRV}_{\ell,c}\!\leftarrow\!(1-\varepsilon)\,\mathrm{BNRV}_{\ell,c}+\varepsilon\,\sigma_{\ell}^2(x_c),
\]
This coupling over $\mathcal{C}_c$ enlarges intra-class diversity and improves the quality of the data.

\subsection{FADRM}
FADRM~\citep{cui2025fadrm} synthesizes each $\tilde{x}$ by periodically fusing the \emph{intermediate synthetic image} with a \emph{resized real patch} from $\mathcal{O}$. Let $P_s\subset\mathcal{O}$ be the initialization patch and $D_t$ the working resolution at iteration $t$. The adjustable residual connection (ARC) applies a per-element convex fusion
\[
\tilde{x}_{t}\;\leftarrow\;\alpha\,\tilde{x}_{t}\;+\;(1-\alpha)\,\mathrm{Resample}(P_s,\,D_t),
\qquad \alpha\in[0,1],
\]
thereby explicitly injecting real-image content at the current resolution $D_t$ while retaining synthesized structure. Smaller $\alpha$ emphasizes high-frequency details from $P_s$; larger $\alpha$ preserves the global layout already formed in $\tilde{x}_t$. By reintroducing real-content priors along the optimization trajectory, FADRM mitigates information vanishing and yields higher-fidelity, semantically faithful synthetic data.

\subsection{RDED.}
RDED~\citep{RDED_2024} constructs $\mathcal{C}$ by class-preserving selection of high-confidence crops from $\mathcal{O}$. 
For each $(x,y)\!\in\!\mathcal{O}$, draw $K$ crops $\{x^{(k)}\}_{k=1}^K\!\sim\!\mathcal{T}(x)$ and rank them by teacher–label agreement
\[
s^{(k)}\;=\;-\mathcal{L}\!\big(\tilde{p}(x^{(k)}),\,y\big), 
\qquad \tilde{p}(u)=f_{\theta_{\mathcal O}}(u).
\]
Keep the image-wise best crop $x^{(\star)}=\arg\max_k s^{(k)}$, then within each class retain the top $M=N\!\cdot\!\mathrm{IPC}$ crops for synthetic dataset construction.

\section{Implementation Details}
\label{app:hyperparams}

\subsection{Datasets}

We evaluate \name{} on two benchmark datasets, \textbf{ImageNet-1K} and \textbf{Tiny-ImageNet}, both formatted as \texttt{ImageFolder}. While the original datasets contain real high-resolution natural images, our training sets are fully composed of synthetic images generated by dataset distillation methods. The validation sets remain unchanged and follow standard preprocessing pipelines.

\paragraph{ImageNet-1K.} ImageNet-1K~\citep{deng2009imagenet} contains 1,000 object classes with approximately 1.28M training images and 50K validation images. For all methods, the distilled training data are generated at resolution $224 \times 224$. During evaluation, each validation image is resized such that the shorter side is 256 pixels, followed by a center crop of size $224 \times 224$. Pixel values are normalized using the standard ImageNet statistics: mean $(0.485, 0.456, 0.406)$ and standard deviation $(0.229, 0.224, 0.225)$.

\paragraph{Tiny-ImageNet.} Tiny-ImageNet~\citep{le2015tiny} is a simplified version of ImageNet with 200 classes, each having 500 training and 50 validation images. All images are pre-resized to $64 \times 64$ resolution. In our setup, distilled training images maintain this resolution. For evaluation, the validation images are directly used without additional resizing or cropping. We normalize the input images using the same statistics as ImageNet for compatibility with pretrained backbones.

\subsection{Storage Analysis}
\label{sec:storage}

We quantify the on-disk footprint of distilled datasets under different IPC settings and compare it to the corresponding soft label storage. Despite their effectiveness, soft labels incur substantial storage overhead, often exceeding the size of the distilled images by an order of magnitude.

\paragraph{Tiny-ImageNet (200 classes, $64{\times}64$).} 
As shown in Table~\ref{tab:tiny_storage}, even at the lowest IPC setting (IPC=1), the original soft labels consume \emph{over 29$\times$ more space} than the images themselves. This ratio remains consistent across IPC values due to the per-sample label overhead, leading to over 1\,GB of soft labels when IPC=50, despite the images themselves occupying only 40\,MB.

\begin{table}[htbp]
    \caption{Original soft label storage for Tiny-ImageNet.}
    \label{tab:tiny_storage}
    \centering
    \renewcommand{\arraystretch}{1}
    \tiny
    \resizebox{0.47\linewidth}{!}{  
        \begin{tabular}{@{}lccc@{}}
        \toprule
        IPC & Image Storage & Original Soft Labels Storage \\
        \midrule
        1   & 0.8\,MB & 23.4\,MB (29.25$\times$ image storage) \\
        10  & 8\,MB   & 234\,MB (29.25$\times$ image storage) \\
        50  & 40\,MB  & 1,170\,MB (29.25$\times$ image storage) \\
        \bottomrule
        \end{tabular}
    }
\end{table}

\paragraph{ImageNet-1K (1000 classes, $224{\times}224$).}
The storage disparity becomes even more pronounced on ImageNet-1K. As shown in Table~\ref{tab:imnt1k_storage}, soft labels require up to \emph{38$\times$ more storage} than images. For instance, at IPC=50, the soft labels occupy nearly \textbf{30\,GB}, despite distilled images requiring less than 1\,GB. Such a storage bottleneck motivates the development of more storage-efficient distillation schemes, such as partial label reuse or label reconstruction via teacher queries.

\begin{table}[htbp]
    \caption{Original soft label storage for ImageNet-1K.}
    \label{tab:imnt1k_storage}
    \centering
    \renewcommand{\arraystretch}{1}
    \tiny
    \resizebox{0.43\linewidth}{!}{  
        \begin{tabular}{@{}lccc@{}}
        \toprule
        IPC & Image Storage & Original Soft Labels Storage \\
        \midrule
        1   & 15\,MB  & 570\,MB (38$\times$ image storage) \\
        10  & 150\,MB & 5.7\,GB (38$\times$ image storage) \\
        50  & 750\,MB & 28.3\,GB (38$\times$ image storage) \\
        \bottomrule
        \end{tabular}
    }
\end{table}

\noindent
These results highlight that while synthetic images can be stored compactly, naive storage of soft labels becomes the primary bottleneck, especially in high-IPC or large-class-count regimes.

\subsection{Experimental Setup}

We evaluate all methods by training classification models exclusively on the distilled datasets, without any access to the original training data. Each synthetic dataset, produced by a specific distillation method, is used to supervise the training of a randomly initialized student model from scratch.

For \textbf{soft-only} baselines, the student is trained using the provided finite soft labels throughout the entire training process, with supervision applied via Kullback–Leibler (KL) divergence.

For \textbf{\name{}}, we adopt a \emph{Soft–Hard–Soft} training strategy. The model is first trained using the soft labels to leverage their fine-grained supervision. In the middle phase, hard labels are used to correct local-view semantic drift. Training then returns to soft labels in the final phase to refine the decision boundaries.

We report the validation accuracy at the final training epoch. To ensure fair and reproducible comparison, all methods are trained under an identical pipeline, with matched data augmentations, hyperparameters, and validation preprocessing steps.

\subsection{Hyper-Parameters}

\textbf{Common Hyperparameters.} 
This part outlines the hyperparameters shared by both the \emph{Soft-Only} baseline and \name{}. 
All models are trained for 300 epochs using the AdamW optimizer with a batch size of 16. 
Additional details, including the learning rate and scheduler smoothing factor (denoted as Eta), are provided in Table~\ref{tab:exp_hyper} for each architecture and dataset.

\begin{table*}[!htbp]
\centering
\caption{Hyper-parameters for all architectures on ImageNet-1K (left) and Tiny-ImageNet (right).}
\label{tab:exp_hyper}
\vspace{0.5em}
\begin{minipage}[t]{0.48\linewidth}
\centering
\caption*{\textbf{ImageNet-1K (input size $224{\times}224$)}}
\resizebox{0.75\linewidth}{!}{ 
\begin{tabular}{@{}lccc@{}}
\toprule
Model & IPC & Learning Rate & Eta\\
\midrule
\multirow{5}{*}{ResNet18}     & 10 & 0.0010 & 2 \\
                              & 20 & 0.0010 & 2 \\
                              & 30 & 0.0010 & 2 \\
                              & 40 & 0.0010 & 2 \\
                              & 50 & 0.0010 & 1 \\
\midrule
\multirow{5}{*}{ShuffleNetV2} & 10 & 0.0010 & 2 \\
                              & 20 & 0.0010 & 2 \\
                              & 30 & 0.0010 & 2 \\
                              & 40 & 0.0010 & 2 \\
                              & 50 & 0.0010 & 1 \\
\midrule
\multirow{5}{*}{ResNet50}     & 10 & 0.0010 & 2 \\
                              & 20 & 0.0010 & 2 \\
                              & 30 & 0.0010 & 2 \\
                              & 40 & 0.0010 & 1 \\
                              & 50 & 0.0010 & 1 \\
\midrule
\multirow{5}{*}{MobileNetV2}  & 10 & 0.0010 & 2 \\
                              & 20 & 0.0010 & 2 \\
                              & 30 & 0.0010 & 2 \\
                              & 40 & 0.0010 & 2 \\
                              & 50 & 0.0010 & 2 \\
\midrule
\multirow{5}{*}{Densenet121}  & 10 & 0.0010 & 2 \\
                              & 20 & 0.0010 & 2 \\
                              & 30 & 0.0010 & 2 \\
                              & 40 & 0.0010 & 2 \\
                              & 50 & 0.0010 & 2 \\
\midrule
\multirow{5}{*}{EfficientNet} & 10 & 0.0010 & 2 \\
                              & 20 & 0.0010 & 2 \\
                              & 30 & 0.0010 & 2 \\
                              & 40 & 0.0010 & 1 \\
                              & 50 & 0.0010 & 1 \\
\bottomrule
\end{tabular}
}
\end{minipage}
\hfill
\begin{minipage}[t]{0.48\linewidth}
\centering
\caption*{\textbf{Tiny-ImageNet (input size $64{\times}64$)}}
\resizebox{0.75\linewidth}{!}{ 
\begin{tabular}{@{}lccc@{}}
\toprule
Model & IPC & Learning Rate & Eta \\
\midrule
\multirow{5}{*}{ResNet18}     & 10 & 0.0010 & 2 \\
                              & 20 & 0.0010 & 2 \\
                              & 30 & 0.0010 & 2 \\
                              & 40 & 0.0010 & 1 \\
                              & 50 & 0.0010 & 1 \\
\midrule
\multirow{5}{*}{ShuffleNetV2} & 10 & 0.0010 & 2 \\
                              & 20 & 0.0010 & 2 \\
                              & 30 & 0.0010 & 2 \\
                              & 40 & 0.0010 & 1 \\
                              & 50 & 0.0010 & 1 \\
\midrule
\multirow{5}{*}{ResNet50}     & 10 & 0.0010 & 2 \\
                              & 20 & 0.0010 & 2 \\
                              & 30 & 0.0010 & 2 \\
                              & 40 & 0.0010 & 1 \\
                              & 50 & 0.0010 & 1 \\
\midrule
\multirow{5}{*}{MobileNetV2}  & 10 & 0.0010 & 2 \\
                              & 20 & 0.0010 & 2 \\
                              & 30 & 0.0010 & 2 \\
                              & 40 & 0.0010 & 1 \\
                              & 50 & 0.0010 & 1 \\
\midrule
\multirow{5}{*}{Densenet121}  & 10 & 0.0010 & 2 \\
                              & 20 & 0.0010 & 2 \\
                              & 30 & 0.0010 & 2 \\
                              & 40 & 0.0010 & 1 \\
                              & 50 & 0.0010 & 1 \\
\midrule
\multirow{5}{*}{EfficientNet}  & 10 & 0.0010 & 2 \\
                              & 20 & 0.0010 & 2 \\
                              & 30 & 0.0010 & 2 \\
                              & 40 & 0.0010 & 1 \\
                              & 50 & 0.0010 & 1 \\
\bottomrule
\end{tabular}
}
\end{minipage}
\vspace{-0.15in}
\end{table*}

\begin{table*}[!htbp]
\centering
\caption{Hard-label training duration (in epochs) for different SLC values. Left: ImageNet-1K; Right: Tiny-ImageNet.}
\label{tab:hard_duration_imnet}
\vspace{0.5em}
\begin{minipage}[t]{0.48\linewidth}
\centering
\caption*{(a) ImageNet-1K}
\scriptsize
\begin{tabular}{@{}lcccccc@{}}
\toprule
SLC & 300 & 250 & 200 & 150 & 100 & 50 \\
\midrule
Hard Epochs & 75 & 75 & 150 & 150 & 150 & 150 \\
\bottomrule
\end{tabular}
\end{minipage}
\hfill
\begin{minipage}[t]{0.48\linewidth}
\centering
\caption*{(b) Tiny-ImageNet}
\scriptsize
\begin{tabular}{@{}lcc@{}}
\toprule
SLC & 100 & 50 \\
\midrule
Hard Epochs & 50 & 50 \\
\bottomrule
\end{tabular}
\end{minipage}
\end{table*}

\textbf{\name{}-Specific Hyperparameters.} 
In addition to soft-label supervision, \name{} incorporates an intermediate hard-label training phase governed by two additional hyperparameters. 
The first is the label smoothing rate $\alpha$, which is fixed at $0.8$ across all experiments. 
The second is the duration of the hard-label phase, which is aligned with the convergence time of soft-label-only training.
These durations are determined empirically based on the number of soft labels available and are presented separately for ImageNet-1K and Tiny-ImageNet in Table~\ref{tab:hard_duration_imnet}, respectively.

\end{document}